\newcommand{\dtw}{\text{DTW}}
\newcommand{\idtw}{\text{IDTW}}
\theoremstyle{definition}
\newtheorem{theorem}{Theorem}
\newtheorem*{theorem*}{Theorem}
\newtheorem*{note*}{Note}
\newtheorem{ex}{Example}
\newtheorem{coro}{Corollary}
\newtheorem{prop}{Proposition}
\newcommand{\first}[1]{\textbf{#1}}
\newcommand{\second}[1]{\underline{#1}}
\newcommand{\argmin}{\mathop{\arg\min}}
\newcommand{\bs}{\boldsymbol}
\newcommand{\diff}{\mathrm{d}}
\definecolor{Gray}{gray}{0.85}
\newcommand{\setF}{\mathcal{F}}
\newcommand{\setDn}{\mathcal{D}_n}
\newcommand{\proj}{\mathcal{P}}
\newcommand{\integralB}{\mathcal{B}}
\newcommand{\event}{\mathsf{E}_n}
\title{Improving Nonparametric Classification via Local Radial Regression with an Application to Stock Prediction}
\author[1,2]{Ruixing Cao$^*$}
\author[3,2]{Akifumi Okuno$^*$}
\author[4]{Kei Nakagawa}
\author[1,2]{Hidetoshi Shimodaira}
\affil[1]{Graduate School of Informatics, Kyoto University}
\affil[2]{RIKEN Center for Advanced Intelligence Project}
\affil[3]{The Institute of Statistical Mathematics}
\affil[4]{Innovation Lab, NOMURA Asset Management Co., Ltd.}
\date{\empty}
\begin{document}
\maketitle

\def\thefootnote{*}\footnotetext{A. Okuno is a corresponding author and his e-mail address is okuno@ism.ac.jp. The first two authors (R. Cao and A. Okuno) contributed equally to this work. }

\begin{abstract}
For supervised classification problems, this paper considers estimating the query's label probability through local regression using observed covariates. 
Well-known nonparametric kernel smoother and $k$-nearest neighbor ($k$-NN) estimator, which take label average over a ball around the query, are consistent but asymptotically biased particularly for a large radius of the ball. 
To eradicate such bias, local polynomial regression~(LPoR) and multiscale $k$-NN~(MS-$k$-NN) learn the bias term by local regression around the query and extrapolate it to the query itself. 
However, their theoretical optimality has been shown for the limit of the infinite number of training samples.
For correcting the asymptotic bias with fewer observations, this paper proposes a \emph{local radial regression~(LRR)} and its logistic regression variant called \emph{local radial logistic regression~(LRLR)}, by combining the advantages of LPoR and MS-$k$-NN. The idea is quite simple: we fit the local regression to observed labels by taking only the radial distance as the explanatory variable and then extrapolate the estimated label probability to zero distance. 
The usefulness of the proposed method is shown theoretically and experimentally. We prove the convergence rate of the $L^2$ risk for LRR with reference to MS-$k$-NN, and our numerical experiments, including real-world datasets of daily stock indices, demonstrate that LRLR outperforms LPoR and MS-$k$-NN. 
\end{abstract}

\textbf{Keywords:} Nonparametric classification, Bias correction, Stock prediction

\section{Introduction} \label{sec:intro}
Predicting a query's label from observed pairs of covariates and their labels is called \emph{supervised classification}, and it has played an indispensable role in machine learning and statistics for long decades. For instance, medical diagnosis~\citep{soni2011predictive} predicts disease from a medical image, spam-mail filtering~\citep{cristianini2000introduction} determines whether a mail is spam or not via texts in the mail, and so on; as is well known, there is a wide range of real-world applications.

For this supervised classification problem, many existing studies employ a simple two-step procedure: One first estimates the label probabilities by regression methods, and second applies a classifier so as to output some labels whose estimated probabilities are large enough~\citep{devroye1996probabilistic,samworth2012optimal}. 
The regression in the first step is compatible with arbitrary regression methods. 
To name a few, we may employ a logistic regression~\citep{hastie2001elements,bishop2006PRML}, kernel smoother~\citep{nadaraya1964estimating,tsybakov2009introduction}, $k$-nearest neighbor ~($k$-NN) estimator~\citep{fix1951nonparametric,cover1967nearest}, and so forth including highly-expressive parametric approaches using neural networks~\citep[NN;][]{dreiseitl2002logistic,Goodfellow-et-al-2016}. 
As they each have their own strengths, we need to choose one or aggregate some of them as circumstances dictate.

Amongst these regression approaches, an advantage of nonparametric methods such as kernel smoother and $k$-NN is that they possess both theoretical tractability and high expressive power, i.e., they can approximate any form of $\eta(X)=\mathbb{E}(Y\mid X) =\mathbb{P}(Y=1 \mid X) \in [0,1]$ representing the label probability for a query $X$ when $\eta$ is a Lipschitz function; $Y\in\{0,1\}$ is a binary response variable indicating the label if $Y=1$.
Here $\mathbb{E}(\cdot|X)$ and $\mathbb{P}(\cdot|X)$ denote the conditional expectation and probability, respectively, given $X$.
The asymptotic higher-order biases 
of kernel smoother and $k$-NN
are keenly evaluated and the bias correction has also been developed. 
In theory, the bias-corrected estimators attain faster convergence than highly-expressive parametric models such as neural networks, when the specific form of the ground-truth function $\eta$ is unknown. 
See, e.g., \citet{fan1996local} for further applications of such bias-corrected estimators. 
More specifically, the asymptotic biases in the kernel smoother and $k$-NN are corrected by local polynomial regression~\citep[LPoR;][]{fan1996local,tsybakov2009introduction} and multiscale $k$-NN~\citep[MS-$k$-NN;][]{okuno2020extrapolation}, respectively, where they are based on a similar idea: LPoR and MS-$k$-NN compute local regression~\citep{loader2006local} around the query, fitting a parametric model of the asymptotic bias, and extrapolating it towards the query for the bias to be eradicated.

While LPoR and MS-$k$-NN are based on the similar idea, they are different in their response variable and explanatory variable (Table~\ref{tab:methods}). Local regression function $u:\mathbb{R}^d \to \mathbb{R}$ in LPoR models $\eta(X)$ locally around the query $X_*$; $\eta(X)$ is estimated by $u(Z)$ with $Z:=X-X_*$ at each $X_*$, and it predicts the raw outcome $Y_i$ by estimating its label probability from the difference of covariates $Z_i:=X_{i}-X_* \in \mathbb{R}^d$ between the $i$-th sample $X_i$ and the query $X_*$.
On the other hand, $v:\mathbb{R}_{\ge 0} \to \mathbb{R}$ in MS-$k$-NN predicts the $k$-NN estimator $\hat{\eta}_{k}^{(k\text{NN})}(X_*)$ at each $X_*$ from the radial distance $r_k:=\|Z_{(k)}\|_2 \in \mathbb{R}_{\ge 0}$, where $(k)$ denotes the index of $k$-th nearest sample to the query; thus, $r_1 \le r_2 \le \cdots$ are ordered distances, and $\hat{\eta}_{k}^{(k\text{NN})}(X_*)$ is the average of the responses $Y_{(1)},\ldots, Y_{(k)}$.
Despite their theoretical optimality, they both rely on asymptotic theory, meaning that they consider the limit of the infinite number of observations for training. They do not perform as well as we expect in practical, real-world situations. 

\begin{table}[!ht]
\centering
\caption{Local regression for bias correction, used in LPoR, MS-$k$-NN and the proposed LRR. }  \label{tab:methods}
\begin{tabular}{lccc}
& Response var. & Explanatory var. & Regression func. \\
\midrule
Local Poly. Reg.~(LPoR) & $Y_i$ & $Z_i=X_i-X_*$ & $u:\mathbb{R}^d \to \mathbb{R}$ \\
Multiscale $k$-NN~(MS-$k$-NN) & $\hat{\eta}_k^{(k\text{NN})}(X_*)$ & $r_k=\|Z_{(k)}\|_2$ & $v:\mathbb{R}_{\ge 0} \to \mathbb{R}$ \\
\rowcolor{gray!10}
\textbf{Proposed} LRR  & $Y_{(i)}$ & $r_i = \|Z_{(i)}\|_2$  & $f:\mathbb{R}_{\ge 0} \to \mathbb{R}$  \\
\end{tabular}
\end{table}

In these circumstances, we provide the following contributions:

\begin{itemize} 
\item (\textbf{Methodology}) In Section~\ref{sec:LRR}, we propose a \emph{local radial regression~(LRR)} and its logistic regression variant called \emph{local radial logistic regression~(LRLR)}, by combining the advantages of LPoR and MS-$k$-NN. LRR, which includes LRLR as a special case, predicts (i)~the independent raw outcomes $Y_{(i)}$ by estimating its label probability directly as well as LPoR, using (ii) the regression function $f:\mathbb{R}_{\ge 0}\to\mathbb{R}$ of the radial distance $r_i=\|X_{(i)}-X_*\|_2 \in \mathbb{R}_{\ge 0}$ as well as MS-$k$-NN. 
The proposed LRR is expected to be trained with fewer observations for the following reasons:
(i') predicting independent raw outcomes $Y_{(i)}$ is easier than predicting mutually-dependent $k$-NN estimators as we can ignore the covariance of the outcomes, and 
(ii') training the regression function $f$ of $1$-dimensional radial component $r_i$ is easier as it includes fewer parameters than that of the $d$-dimensional raw covariate $Z_i$. 
For predicting $Y$ at $Z=0$, we do not need the ``full model'' with all the variables in $Z$ but the model with only $r=\|Z\|_2$ suffices; thus, the proposed LRR is preferable according to the principle of simplicity, i.e., Occam's Razor.
Note that most of our argument applies to a real-valued response $Y\in\mathbb{R}$ with $\eta(X)=\mathbb{E}(Y\mid X)$, but we focus on the binary response $Y\in\{0,1\}$ in this paper.

\item (\textbf{Theory}) In Section~\ref{subsec:improved_rate}, we prove the convergence rate of the point-wise $L^2$-risk for the proposed LRR. The obtained rate is compatible with those of LPoR and MS-$k$-NN. 

\item (\textbf{Experiments}) In Section~\ref{sec:experiments}, our numerical experiment shows that LRLR outperforms LPoR and MS-$k$-NN. 
In Section~\ref{sec:application}, we also apply LRLR to real-world datasets of daily stock indices called S\&P 500, S\&P/TSX, EURO STOXX 50, FTSE 100, DAX, CAC 40, TOPIX, and Hang Seng, by following \citet{nakagawa2018stock}. 
We classify whether month-end closing prices of target months are going up or down by LRLR, by comparing to the previous fluctuation patterns of stock indices. 
The predictive classification accuracy is improved from existing methods.  
\end{itemize}

This paper is partially based on a short paper of ours~\citep{Cao2021multiscale}, published in the proceedings of an annual domestic meeting of the Japanese Society for Artificial Intelligence~(JSAI).

\paragraph{Organization of this paper:} 
Background of this study, including the descriptions of the existing kernel smoother, $k$-nearest neighbor ~($k$-NN), local polynomial regression~(LPoR), and multiscale $k$-NN~(MS-$k$-NN), is explained in Section~\ref{sec:background}. 
The proposed local radial regression~(LRR), local radial logistic regression~(LRLR), and related discussions are described in Section~\ref{sec:LRR}. 
Numerical experiments on synthetic datasets are conducted in Section~\ref{sec:experiments}, application to real-world datasets is shown in Section~\ref{sec:application}, and we conclude this paper in Section~\ref{sec:conclusion}.

\section{Background}
\label{sec:background}

In this section, we explain the background of this study. 
Problem setting is described in Section~\ref{subsec:problem_setting}, and notation is shown in Section~\ref{subsec:notation}. 
Kernel smoother and $k$-NN are described in Section~\ref{subsec:KS_and_kNN}, local polynomial regression and multiscale $k$-NN correcting the asymptotic biases of the kernel smoother and $k$-NN are shown in Section~\ref{subsec:local_polynomial} and \ref{subsec:multiscale_kNN}, respectively. 

\subsection{Problem Setting}
\label{subsec:problem_setting}
Let $n \in \mathbb{N}$ be the sample size and $d\in \mathbb{N}$ be the dimensionality of the covariate.
Our dataset $\mathcal{D}_n$ consists of $(X_i,Y_i) \in \mathbb{R}^d \times \{0,1\}$ for $i=1,2,\ldots,n$, that are i.i.d.~with a distribution $\mathbb{Q}$. 
$X \in \mathbb{R}^d$ is called a \emph{covariate} and $Y \in \{0,1\}$ is called a \emph{label} (or more generally, \emph{outcome}), where $\eta(X)=\mathbb{P}(Y=1 \mid X)$ is also called label probability. 
The goal of supervised classification is to learn a classifier $\hat{g}_n:\mathbb{R}^d \to \{0,1\}$ using the observation $\mathcal{D}_n$, so that $\hat{g}_n(X)$ predicts $Y \in \{0,1\}$ for $(X,Y) \sim \mathbb{Q}$. 
In particular, this paper considers a plug-in type classifier
\begin{align}
    \hat{g}_n(X) := \mathbb{1}(\hat{\eta}_n(X) \ge 1/2)
    \label{eq:plug-in-classifier}
\end{align}
equipped with the estimated label probability by regression function $\hat{\eta}_n:\mathbb{R}^d \to \mathbb{R}$~(see, e.g., \citet{tsybakov2009introduction}, \citet{samworth2012optimal}, and \citet{okuno2020extrapolation}). 
A classifier $g(X):=\mathbb{1}(\eta(X) \ge 1/2)$ using the ground-truth label probability $\eta(X)$ is called Bayes-optimal classifier~\citep{devroye1996probabilistic}. 
Herein, this paper discusses estimating the regression function $\hat{\eta}_n$, so as to approximate the ground-truth $\eta$.

\subsection{Notation}
\label{subsec:notation}
In the remainder of this paper, we employ the following symbols. 
Let $[n]$ be the set $\{1,2,\ldots,n\}$ for any positive integer $n \in \mathbb{N}$. 
$\|X\|_2=(\sum_{i=1}^{d}x_i^2)^{1/2},\|X\|_{\infty}=\sup_{i \in [d]}|x_i|$ for a vector $X=(x_1,x_2,\ldots,x_d) \in \mathbb{R}^d$. 
$\lfloor \beta \rfloor:=\max\{\beta' \in \mathbb{N} \mid \beta' < \beta \}$ for $\beta>0$, i.e., 
the maximal integer that is strictly smaller than $\beta$; for example, $\lfloor 3.4 \rfloor = 3$, $\lfloor 3 \rfloor = 2$, so this is slightly different from the typical definition of the floor function with  $\lfloor 3 \rfloor = 3$. 
Given a query $X_* \in \mathbb{R}^d$, 
indices $1,2,\ldots,n$ of the observed covariates $X_1,X_2,\ldots,X_n \in \mathbb{R}^d$ are rearranged to $(1),(2),\ldots,(n)$ so that distances are ordered as $    \|X_{(1)}-X_*\|_2
\le 
\|X_{(2)}-X_*\|_2 
\le \cdots \le 
\|X_{(n)}-X_*\|_2$. 
Then we define, 
\begin{align}
    r_k := \|X_{(k)}-X_*\|_2
    \qquad 
    \label{eq:radius}
\end{align}
for $k \in [n]$. $d$-dimensional ball of radius $r$ is defined as $B_r(X_*):=\{X \in \mathbb{R}^d \mid \|X-X_*\|_2 \le r\} \subset \mathbb{R}^d$. 
$(X,Y) \in \mathbb{R}^d \times \{0,1\}$ denotes a pair of random variables following a distribution $\mathbb{Q}$ throughout this paper, and the conditional label probability with given $X$ is denoted by $\eta(X):=\mathbb{P}(Y=1 \mid X)$. 
$\mathbb{R}_{\ge 0},\mathbb{R}_+$ denote sets of non-negative and positive real values, respectively.

\subsection[Kernel Smoother and k-NN]{Kernel Smoother and $k$-NN}
\label{subsec:KS_and_kNN}

For estimating the label probability $\eta$, a kernel smoother ~\citep[KS;][]{nadaraya1964estimating,tsybakov2009introduction} and $k$-nearest neighbour estimator~\citep[$k$-NN;][]{fix1951nonparametric,cover1967nearest} are defined by
\begin{align}
    \hat{\eta}_{h}^{(\text{KS})}(X_*)
    &:=
    \argmin_{\theta \in \mathbb{R}} \sum_{i=1}^{n}
    \mathbb{1}(r_i \le h)
    \{Y_{(i)}-\theta\}^2
    =
    \frac{1}{\max\{i:r_i \le h\}}
    \sum_{i:r_i \le h} Y_{(i)}, \label{eq:kernel_smoother}\\
    \hat{\eta}_{k}^{(k\text{NN})}(X_*)
    &:=
    \frac{1}{k}\sum_{i \le k} Y_{(i)}, 
    \label{eq:kNN}
\end{align}
respectively, where $h>0$ and $k \in \mathbb{N}$ are hyper-parameters. 
$h$ is called bandwidth. 
Both of these two estimators take average of the observed labels $\{Y_{(i)}\}$, whose observed covariates $\{X_{(i)}\}$ are inside the balls $B_h(X_*)$ and $B_{r_k}(X_*)$, respectively; 
these two estimators are compatible if $h=r_k$. 
While the boxcar kernel $\mathbb{1}(r_i \le h)$ in the kernel smoother~(\ref{eq:kernel_smoother}) can be replaced with other types of kernel function~(see \citet{tsybakov2009introduction}), this paper considers only the boxcar case for simplicity.

In theory, the convergence rate of the plug-in classifier (\ref{eq:plug-in-classifier}) equipped with the kernel smoother is discussed in \citet{hall2005bandwidth}, while the convergence rate for classifier using the $k$-NN estimator is also proved in \citet{chaudhuri2014rates}. 
These two plug-in classifiers attain the same optimal rate if the ground-truth function $\eta(X)=\mathbb{P}(Y=1 \mid X)$ is Lipschitz but not differentiable. 
If the ground-truth function $\eta$ is highly-smooth (e.g., $\eta$ is twice-differentiable, so the target function class is more restricted), the theoretical efficiency limit for estimating $\eta$ is improved, and asymptotic higher-order biases become important in these estimators. 
See Appendix~\ref{app:asymptotic_biases} for the summary. 
The following local polynomial regression and multiscale $k$-NN correct these biases to attain the optimal rate, regardless of the smoothness.

\subsection{Local Polynomial Regression}
\label{subsec:local_polynomial}

Local polynomial regression estimator~\citep[LPoR;][]{fan1996local,tsybakov2009introduction} is defined by replacing the single real-valued parameter $\theta \in \mathbb{R}$ in (\ref{eq:kernel_smoother}) with a polynomial function $u:\mathbb{R}^d \to \mathbb{R}$ of degree $q$: 
\begin{align}
    \hat{\eta}_h^{(\text{LPoR})}(X_*)
    :=
    \hat{u}(\bs 0),
    \quad 
    \hat{u}
    :=
    \argmin_{\text{polynomial} \: u:\mathbb{R}^d \to \mathbb{R}}
    \sum_{i=1}^{n} \mathbb{1}(r_i \le h)
    \{
        Y_{(i)} - u(X_{(i)}-X_*)
    \}^2.
    \label{eq:local_polynomial}
\end{align}
By specifying $q=0$, i.e., $u(X)=\theta\in \mathbb{R}$ for any $X \in \mathbb{R}^d$, LPoR reduces to the kernel smoother~(\ref{eq:kernel_smoother}).

Herein, assume that the ground-truth label probability $\eta(X)=\mathbb{P}(Y=1 \mid X)$ is $q$-times continuously differentiable for some $q \in \mathbb{N}$. 
Intuitively speaking, the regression function $u:\mathbb{R}^d \to \mathbb{R}$ of degree $q$ learns the asymptotic higher-order bias around the query $X_*$, and eradicate the bias by extrapolating it to the query $X_*$ itself. 
See Appendix~\ref{app:asymptotic_biases} for the summary. 
Further assuming that the $q$-th derivatives of $\eta$ are no more differentiable, the plug-in classifier~(\ref{eq:plug-in-classifier}) equipped with the LPoR estimator $\hat{\eta}_h^{(\text{LPoR})}$ using the polynomial of degree $q$ is proved to be minimax optimal~\citep{samworth2012optimal}, by specifying an appropriate bandwidth $h=h_n \to 0$.

Besides the asymptotic optimality of LPoR, 
the polynomial function $u$ of the raw covariate $Z_i=X_i-X_* \in \mathbb{R}^d$ includes $1+d+d^2+\cdots+d^q$ parameters to be estimated: a larger number of observations is needed for training $u$ compared to the regression function of $1$-dimensional radial distance $r_i=\|Z_{(i)}\|_2 \in \mathbb{R}_{\ge 0}$. 
The regression function of the radial distance, used in the following multiscale $k$-NN and our proposed LRR, includes only $1+q$ parameters.

\subsection[Multiscale k-NN]{Multiscale $k$-NN}
\label{subsec:multiscale_kNN}

While LPoR corrects the asymptotic higher-order bias of the kernel smoother, multiscale $k$-NN~\citep[MS-$k$-NN;][]{okuno2020extrapolation} shown below is designed to eradicate the bias in $k$-NN estimator $\hat{\eta}^{(k\text{NN})}_k$: 

\begin{align}
    \hat{\eta}_{k_1,k_2,\ldots,k_J}^{(\text{MS}k\text{NN})}(X_*)
    :=
    \hat{v}(0),
    \quad 
    \hat{v}
    :=
    \argmin_{\text{polynomial} \: v:\mathbb{R}_{\ge 0} \to \mathbb{R}}
    \sum_{j=1}^{J}
    \{
        \hat{\eta}_{k_j}^{(k\text{NN})}(X_*)
        -
        v(\underbrace{\|X_{(k_j)}-X_*\|_2}_{=r_{k_j} \in \mathbb{R}_{\ge 0}})
    \}^2.
\label{eq:msknn}
\end{align}
$1 \le k_1 < k_2 < \cdots < k_J \le n$ with $J \in \mathbb{N}$ are user-specified parameters. 
By specifying the regression function $v$ and the parameters $\bs k=(k_{1},k_{2},\ldots,k_{J})$ appropriately, 
extrapolating the trained $\hat{v}(r)$ towards $r=0$ yields an imaginary $0$-NN whose asymptotic bias is eradicated. See Appendix~\ref{app:asymptotic_biases} for the summary.
The plug-in classifier~(\ref{eq:plug-in-classifier}) equipped with the MS-$k$-NN estimator $\hat{\eta}^{(\text{MS}k\text{NN})}_{k_1,k_2,\ldots,k_J}$ attains the minimax optimal rate, same as the above LPoR. See \citet{okuno2020extrapolation} Theorem~2 for details.

As for the local regression function $v$, we may employ a logistic function
\begin{align}
    v(r):=\sigma(\theta_0+\theta_1 r+\theta_2 r^2+\cdots+\theta_q r^q):\mathbb{R}_{\ge 0} \to [0,1]
    \label{eq:logistic_function}
\end{align}
using a sigmoid function $\sigma(z):=(1+\exp(-z))^{-1}$. 
$\theta=(\theta_0,\theta_1,\ldots,\theta_q) \in \mathbb{R}^{1+q}$ is a parameter vector to be estimated. 
In our numerical experiments, we distinguish MS-$k$-NN (poly.) and MS-$k$-NN (logi.), where the former employs the polynomial function $v(r)=\theta_0+\theta_1 r+\theta_2 r^2+\cdots+\theta_q r^q$ while the latter employs the logistic function~(\ref{eq:logistic_function}).

In this MS-$k$-NN, the local regression function $v:\mathbb{R}_{\ge 0} \to \mathbb{R}$ is trained as if the target variables (i.e., $k$-NN estimators) were mutually independent. 
$k$-NN estimators are in fact dependent as they share a part of the labels, i.e., $\hat{\eta}^{(k\text{NN})}_k$ and $\hat{\eta}^{(k\text{NN})}_{k'}$ share $Y_{(1)},Y_{(2)},\ldots,Y_{(\min\{k,k'\})}$: the loss function~(\ref{eq:msknn}) should be weighted by the inverse matrix of the covariance matrix of outcomes, or independent outcomes should be considered as well as LPoR and our proposed LRR, to obtain a better estimator~(see, e.g., \citet{KariyaTakeaki2004GlsT} Theorem~2.1 for the efficiency of general least squares predicting dependent outcomes).

\section{Local Radial Regression}
\label{sec:LRR}
As explained in the previous sections, LPoR and MS-$k$-NN rely on the asymptotic theory, meaning that they consider the limit of the infinite number of observations for training. 
They do not perform as well as we expect in finite real-world situations, whereby they have substantial room for improvement. For computing bias-corrected estimator with fewer observations, we propose a local radial regression~(LRR) and local radial logistic regression in Section~\ref{subsec:LRR}. 
The relation to the existing LPoR and MS-$k$-NN is described in Section~\ref{subsec:relation_to_LPoR_MSkNN}. 
The improved convergence rate is described in Section~\ref{subsec:improved_rate}

\subsection{Local Radial Regression (LRR) and Local Radial Logistic Regression (LRLR)}
\label{subsec:LRR}

We propose a \emph{local radial regression~(LRR)} as 
\begin{align}
    \hat{\eta}^{(\text{LRR})}(X_*)
    :=
    \hat{f}(0),
    \quad 
    \hat{f}:=\argmin_{f:\mathbb{R}_{\ge 0}\to\mathbb{R}}
    \sum_{i=1}^{n}
    w(r_i)\ell\left( 
        Y_{(i)},f(r_i)
    \right),
    \label{eq:LRR}
\end{align}
where $w(r):\mathbb{R}_{\ge 0} \to \mathbb{R}_{\ge 0}$ denotes a user-specified weight function, 
$f:\mathbb{R}_{\ge 0} \to \mathbb{R}$ denotes a user-specified parametric regression function such as the polynomial $f(r) = \theta_0 + \theta_1 r + \theta_2 r^2 + \cdots + \theta_q r^q$ and 
$\ell(\cdot, \cdot):\mathcal{Y} \times \mathcal{Y} \to \mathbb{R}_{\ge 0}$ denotes a loss function for some set $\mathcal{Y} \subset \mathbb{R}$ such as 
the squared loss $\ell(y,y'):=(y-y')^2$. 
LRR predicts the raw outcome $Y_{(i)}$ directly as well as LPoR, while 
LRR uses the radial distance $r_{i}:=\|X_{(i)}-X_*\|_2$ of the covariate $X_{(i)}$ as well as MS-$k$-NN.
Therefore, the proposed LRR combines the advantages of LPoR and MS-$k$-NN.

Note that discussions on the weight function for local (non-radial) regression can be found in a range of existing studies. For example, the weighting for robust local regression has been well developed~(see, e.g.,  \citet{cleveland1979robust,cleveland1981lowess} and \citet{cleveland1988locally}). However, the weight function concerning the radial distance $r_k$ is slightly out of scope of the main interest in local regression studies; in our numerical experiments, we specify $w(r)=1$ or $w(r)=1/r$.

Same as MS-$k$-NN, we may employ the logistic regression function (\ref{eq:logistic_function}) for $f$; training the logistic regression function by minimizing the logistic loss~(i.e., negative log-likelihood for the binomial distribution) $\ell(y,y'):=y \log y'+(1-y)\log(1-y'),\mathcal{Y}=[0,1]$ is considered as a special case of LRR. We call it \emph{local radial logistic regression~(LRLR)}.

\subsection[Relation to LPoR and Multiscale k-NN]{Relation to LPoR and Multiscale $k$-NN}
\label{subsec:relation_to_LPoR_MSkNN}

In this section, we provide an intuitive explanation for the relations among LRR, LPoR, and MS-$k$-NN. Please see Section \ref{subsec:improved_rate} for more rigorous theory on LRR. 
Here, for simplicity, we employ the squared loss $\ell(y,y')=(y-y')^2$, a fixed query $X_*$, a constant bandwidth $h>0$, $J=J(h):=\max_{j\in [n]}\{j : r_{j} \le h\}$, $k_j=j$ for $j \in [n]$ and a weight function $w(r)=\mathbb{1}(r \le h)$. 
Then, the local regression function trained in LPoR, MS-$k$-NN, and the proposed LRR are 
\begin{align*}
    \hat{u}&:=\argmin_{u}
    \left\{ 
    L_{1,J}(u):=
    \frac{1}{J}\sum_{j=1}^{J}\{Y_{(j)}-u(X_{(j)}-X_*)\}^2
    \right\}, \\
    \hat{v}&:=\argmin_{v}
    \left\{ 
    L_{2,J}(v):=
    \frac{1}{J}\sum_{j=1}^{J}\{\hat{\eta}_{j}^{(k\text{NN})}(X_*)-v(r_{j})\}^2
    \right\}, \\
    \hat{f}&:=\argmin_{f}
    \left\{
    L_{3,J}(f):=
    \frac{1}{J}
    \sum_{j=1}^{J}\{Y_{(j)}-f(r_{j})\}^2
    \right\},
\end{align*}
respectively. 
The law of large numbers proves 
\begin{align}
    L_{1,J}(u)
&\overset{\text{in probability}}{\to}
    \mathbb{E}_{Z}\left( 
        \left\{\mathbb{P}(Y = 1 \mid X-X_*=Z) -u(Z)\right\}^2
    \right)+C_1, \label{eq:LLN_LPoR} \\
    L_{2,J}(v)
&\overset{\text{in probability}}{\to}
    \mathbb{E}_{r}\left( 
        \left\{\mathbb{P}(Y = 1 \mid \|X-X_*\|_2 \le r) -v(r)\right\}^2
    \right)+C_2, \label{eq:LLN_MSkNN} \\
    L_{3,J}(f)
&\overset{\text{in probability}}{\to}
    \mathbb{E}_{r}\left( 
        \left\{\mathbb{P}(Y = 1 \mid \|X-X_*\|_2 = r) -f(r)\right\}^2
    \right)+C_3 \label{eq:LLN_LRR}
\end{align}
as $n \to \infty$ (whereby $J \to \infty$) for some $C_1,C_2,C_3 \ge 0$. Here $\mathbb{E}_Z$ and $\mathbb{E}_r$ denotes the expectation with respect to $Z = X' - X_*$ and $r = \|Z\|_2$, respectively, by considering that $X'$ is drawn independently from the same distribution of $X$ but conditioned as $\|X' - X_*\| \le h$.
The limit of $L_{2,J}(v)$ is particularly obtained by \citet{chaudhuri2014rates} Lemma~9 proving that $\hat{\eta}^{(k\text{NN})}_{k_J}(X_*)$ is asymptotically compatible with $\mathbb{P}(Y=1 \mid X \in B_h(X_*))$. 
These limits of the loss functions indicate what function each local regression is estimating. 

Firstly, LPoR is compatible with LRR in some simple cases, regardless of whether we consider the asymptotic case or not: LPoR with $u(Z)=f(\|Z\|_2)$ reduces to LRR in this simple setting. This restriction on the regression function $u$ for LRR yields better empirical performance of LRR than general existing LPoR. Also see our numerical experiments in Section~\ref{sec:experiments}.  

Regarding the MS-$k$-NN, its regression function $\hat{v}$ approximates the label probability over the ball $B_r(X_*)$, while $\hat{f}$ in the proposed LRR approximates the probability over the ball surface $\partial B_r(X_*)$. 
These two estimators $\hat{v}$ and $\hat{f}$ are different but compatible in the limit $r \searrow 0$. 
Despite their asymptotic compatibility, these two estimators are different in the finite real-world setting, where the function $\hat{f}$ in LRLR is expected to be estimated more efficiently than $\hat{v}$ in MS-$k$-NN due to the dependency issue described at the last of Section~\ref{subsec:multiscale_kNN}.

\subsection{Convergence Rate Analysis}
\label{subsec:improved_rate}

In this section, we show a convergence rate of the point-wise $L^2$-risk for LRR, at a fixed query $X_* \in \mathbb{R}^d$. We first describe the detailed settings and conditions, to prove the convergence rate in Theorem~\ref{theo:convergence_rate_LRLR}.

For simplicity of the proof, we employ a specific form of the weights and the loss function:
\begin{align}
    w_n(r)
    :=
    \begin{cases}
    1/N_n & (r \le \tilde{r}_n) \\
    0 & (r>\tilde{r}_n)
    \end{cases},
    \quad 
    \ell(a,b):=(a-b)^2,
    \label{eq:specific_weights}
\end{align}
with a positive sequence $\{\tilde{r}_n\}_{n \in \mathbb{N}}$.
Herein, 
\[
    N_n:=\sum_{i=1}^{n} \mathbbm{1}(\|X_i-X_*\|_2 \le \tilde{r}_n)
\]
denotes the number of positive weights (i.e., the number of observations used for local regression) and $\overline{N}_n:=\mathbb{E}(N_n)$ denotes its expectation.
We assume that the sequence $\{\tilde{r}_n\}_{n \in \mathbb{N}}$ satisfies
\begin{align}
    \tilde{r}_n \searrow 0
    \quad \text{and} \quad  
    n^{1-\nu} \tilde{r}_n^d \to \infty
    \quad (n \to \infty)
    \label{eq:rate_rn}
\end{align}
for some $\nu > 0$, indicating that $\overline{N}_n \ge c n\tilde{r}_n^d \to \infty$ (for some $c>0$). Note that, while (\ref{eq:rate_rn}) is a slightly stronger condition than that of the bandwidth in standard nonparametric theories~($h_n \searrow 0,n h_n^d \to \infty$; see, e.g., \citet{tsybakov2009introduction}), the optimal radius $\tilde{r}_n$ obtained by Theorem~\ref{theo:convergence_rate_LRLR} still satisfies (\ref{eq:rate_rn}). 

We employ a set of $2\omega$-th degree polynomials with even-degree terms
\begin{align}
    \setF
    =
    \setF(\omega)
    :=
    \left\{ f(r)
    :=
    \theta_0+\sum_{c=1}^{\omega} \theta_c r^{2c} \mid \theta_0,\theta_1,\ldots,\theta_{\omega} \in \mathbb{R}\right\}
    \label{eq:polynomial}
\end{align}
for $\omega \in \mathbb{N}$. 
$\omega$ will be specified by $\omega=\lfloor \beta/2 \rfloor$ under the assumptions that the distributions of the label and covariate are $\beta$-H{\"o}lder (see Condition (C-2) for details). 
The settings (\ref{eq:specific_weights}) and (\ref{eq:polynomial}) are compatible with the existing theories of MS-$k$-NN~\citep{okuno2020extrapolation}.


For proving the convergence rate of LRR, we assume the following conditions (C-1)--(C-3) on the density function $\mu$ of the covariate $X$ and the label probability $\eta(X)=\mathbb{P}(Y=1 \mid X)$, 
with the fixed query $X_* \in \mathbb{R}^{d}$ and a fixed constant $R \in (0,1)$.

\begin{enumerate}[{(C-1)}]
\item Over the ball $B_R(X_*)$, $\mu$ is lower-bounded by a positive constant $l>0$. 

\item Over the ball $B_R(X_*)$, $\mu(X)$ and $(\eta\mu)(X) := \eta(X)\mu(X)$ are $\beta$-H{\"o}lder, i.e., there exist a polynomial $b_*^{(q)}:\mathbb{R}^d \to \mathbb{R}$ of degree $q:=\lfloor \beta \rfloor$ satisfying $b_*^{(q)}(\bs 0)=0$ and $L_{\mu} \in (0,\infty)$ such that
\begin{align} 
\mu(X) = \mu(X_*) + b_*^{(q)}(X-X_*) + \varepsilon_{\mu}(X),
\qquad 
|\varepsilon_{\mu}(X)| \le L_{\mu}\|X-X_*\|_2^{\beta}
\quad \: (\forall X \in B_R(X_*)).
\label{eq:taylor_mu}
\end{align}
(\ref{eq:taylor_mu}) is analogous to Taylor expansion, and a similar expression can be obtained for $\eta\mu$. 

\item There exist $c,C,\tau>0$ and $\varphi \in (0,1)$ such that
\[
\mathbb{P}(\zeta_n \ge \varphi N_n) \le C\exp(-c n^{\tau})
\]
holds for 
\[
    \tilde{R}_n := \left(\begin{array}{cccc}
    r_{1}^2 & r_{1}^4 & \cdots & r_{1}^{2 \lfloor \beta/2 \rfloor} \\
    r_{2}^2 & r_{2}^4 & \cdots & r_{2}^{2 \lfloor \beta/2 \rfloor} \\
    \vdots & \vdots & \ldots & \vdots \\
    r_{N_n}^2 & r_{N_n}^4 & \cdots & r_{N_n}^{2 \lfloor \beta/2 \rfloor}
    \end{array}\right),
    \,
    \proj_{\tilde{R}_n}:=\tilde{R}_n(\tilde{R}_n^{\top}\tilde{R}_n)^{-1}\tilde{R}_n^{\top},
    \, 
    1_{N_n}:=(\underbrace{1,1,\ldots,1}_{N_n})^\top,
    \, 
    \zeta_n:=\langle 1_{N_n}, \proj_{\tilde{R}_n} 1_{N_n}\rangle.
\]
\end{enumerate}

The above Conditions (C-1)--(C-3) are compatible with \citet{okuno2020extrapolation}. 
Therein, (C-1) and (C-2) are called strong density assumption~(SDA) and ($\beta, L_\mu, \mathbb{R}^d$)-H{\"o}lder class of functions, respectively, with reference to \citet{audibert2007learning}. 
(C-3) is a modification of the condition (C-3) in \citet{okuno2020extrapolation} Section 4.3. 
As the LRR estimator is an intercept of the polynomial regression, a simple matrix algebra indicates that the LRR estimator is compatible with the weighted average of the outcomes $\sum_{i=1}^{N_n} \rho_{n,i}Y_{(i)}$ for some weights $\{\rho_{n,i}\}_{i=1}^{N_n} \subset \mathbb{R}$ defined in (\ref{eq:zeta_n}); then, the condition (C-3) is needed to prevent the weights $\{\rho_{n,i}\}_{i=1}^{N_n}$ from being diverged. 
We provide Example~\ref{ex:C-3} satisfying the condition (C-3), with the proof shown in Appendix~\ref{proof:ex:C-3}.  

\begin{ex}
\label{ex:C-3}
    Assuming that $\lfloor \beta/2 \rfloor=1$ and $X$ distributes uniformly (at least over the ball $B_R(X_*)$), (C-3) is satisfied with $\varphi=1-1/\{2(d+1)^2\}$. 
\end{ex}

\bigskip
To ensure a proper estimation of LRR, we consider an event
\[
    \event: \quad 
    N_n \ge 1+\lfloor \beta/2 \rfloor 
    \, \text{ and } \, 
    \zeta_n \le \varphi N_n
\] 
with $\beta,\zeta_n,\varphi$ specified in (C-2) and (C-3), and we redefine the LRR estimator 
\begin{align}
    \hat{\eta}_n(X_*) := 
    \begin{cases} 
    \hat{f}_n(0),
    \quad 
    \hat{f}_n := \argmin_{f \in \setF} \sum_{i=1}^{n} w_n(r_i) \{Y_{(i)}-f(r_i)\}^2 & (\text{if } \event)\\
    0 & (\text{if }\lnot \event)
    \end{cases}
    \label{eq:LRLR_specific}
\end{align}
with the negation $\lnot \event$. 
Then, the convergence rate of the $L^2$ risk is evaluated as follows. 

\begin{theorem}
\label{theo:convergence_rate_LRLR}
Assuming (C-1)--(C-3) with the fixed query $X_* \in \mathbb{R}^d$, the LRR estimator (\ref{eq:LRLR_specific}) equipped with $\omega=\lfloor \beta/2 \rfloor$ and (\ref{eq:specific_weights})--(\ref{eq:polynomial}) satisfies
\[
\mathbb{E}_{\setDn}(\{\eta(X_*)-\hat{\eta}_n(X_*)\}^2)
\le 
    C \: \max\left\{ 
     \, \tilde{r}_n^{2\beta}, \, \frac{1}{n \tilde{r}_n^d}
    \right\}
\]
for some $C>0$.
\end{theorem}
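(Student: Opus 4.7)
The plan is a standard conditional bias--variance decomposition of the $L^2$ risk. The event $\lnot\event$ contributes at most $\mathbb{P}(\lnot\event)$ because $\hat\eta_n(X_*)=0$ and $\eta(X_*)\le 1$ there; combining (C-3) with a Chernoff-type lower bound on $N_n$ (whose mean $\overline N_n$ grows like $n\tilde r_n^d\to\infty$ by (C-1) and (\ref{eq:rate_rn})) shows this probability is exponentially small in $n^\tau$, negligible against the target rate. All further analysis takes place on $\event$.

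On $\event$, since the weight $w_n$ is constant on the in-ball samples, LRR reduces to ordinary least squares of $(Y_{(i)})_{i=1}^{N_n}$ against the design matrix $D_n=[\,1_{N_n},\,\tilde R_n\,]$, so $\hat\eta_n(X_*)=\sum_{i=1}^{N_n}\rho_{n,i}Y_{(i)}$ with $\sum_i\rho_{n,i}^2=e_1^\top(D_n^\top D_n)^{-1}e_1=1/(N_n-\zeta_n)$ by a block (Schur) inversion. On $\event$ this is at most $1/((1-\varphi)N_n)$, so the conditional variance $\mathrm{Var}(\hat\eta_n(X_*)\mid X_1,\ldots,X_n)$ is bounded by $1/(4(1-\varphi)N_n)$ via $Y_{(i)}\in\{0,1\}$. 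A standard Bernstein concentration $N_n\ge \overline N_n/2$ with high probability then gives a variance term of order $1/(n\tilde r_n^d)$.

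For the (squared) conditional bias $\bigl(\sum_i\rho_{n,i}\eta(X_{(i)})-\eta(X_*)\bigr)^2$, the crux is to approximate $m(r):=\mathbb{E}[\eta(X)\mid\|X-X_*\|_2=r]$ by an element of $\setF(\omega)$. Writing $m(r)=\bigl(\int_{S^{d-1}}(\eta\mu)(X_*+r\theta)\,d\sigma(\theta)\bigr)\big/\bigl(\int_{S^{d-1}}\mu(X_*+r\theta)\,d\sigma(\theta)\bigr)$, I plug in the Taylor-type expansions from (C-2), use the spherical identity $\int_{S^{d-1}}\theta^\alpha\,d\sigma(\theta)=0$ unless every $\alpha_i$ is even (which forces $|\alpha|$ even), expand the reciprocal geometrically around $\mu(X_*)>0$ (positive by (C-1)), and truncate at radial degree $2\omega=2\lfloor\beta/2\rfloor$, so that the residual is $O(r^\beta)$. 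This yields $m(r)=g(r)+\delta(r)$ with $g\in\setF(\omega)$, $g(0)=\eta(X_*)$, and $|\delta(r)|\le Cr^\beta$. Exploiting the OLS reproducing identity $\sum_i\rho_{n,i}g(r_i)=g(0)$ for every $g\in\setF(\omega)$ (and in particular $\sum_i\rho_{n,i}=1$), the bias splits as
\[
\sum_i\rho_{n,i}\eta(X_{(i)})-\eta(X_*)=\sum_i\rho_{n,i}\,\delta(r_i)+\sum_i\rho_{n,i}\,\xi_i,\qquad \xi_i:=\eta(X_{(i)})-m(r_i).
\]
Conditional on the ordered radial distances $(r_1,\ldots,r_n)$, the directions of the $X_{(i)}$ are independent with sphere density proportional to $\mu(X_*+r_i\theta)$, so the $\xi_i$ are conditionally independent with mean zero and $|\xi_i|\le 1$. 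Cauchy--Schwarz then bounds the radial piece by $\bigl(\sum_i\rho_{n,i}\delta(r_i)\bigr)^2\le\bigl(\sum_i\rho_{n,i}^2\bigr)\bigl(\sum_i\delta(r_i)^2\bigr)\le C\,\tilde r_n^{2\beta}/(1-\varphi)$, while $\mathbb{E}\bigl[\bigl(\sum_i\rho_{n,i}\xi_i\bigr)^2\mid\bs r\bigr]=\sum_i\rho_{n,i}^2\mathbb{E}[\xi_i^2\mid\bs r]\le 1/((1-\varphi)N_n)$, contributing an additional $O(1/(n\tilde r_n^d))$ after expectation.

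Summing the three pieces (conditional variance, squared conditional bias, and the exponentially small $\lnot\event$ remainder) yields the claimed $C\max\{\tilde r_n^{2\beta},1/(n\tilde r_n^d)\}$ bound. I expect the main obstacle to be the quotient/spherical-integration argument: one must carefully track which powers of $r$ survive the geometric expansion of the reciprocal, verify that the resulting polynomial truly lies in $\setF(\omega)$ (i.e., contains only even powers of $r$ up to degree $2\omega$ rather than merely being of the same total degree), and keep the truncation remainder precisely $O(r^\beta)$ in spite of the paper's nonstandard ``$\lfloor\beta\rfloor<\beta$'' floor convention. A secondary but delicate point is rigorously justifying the conditional independence of the directions of $X_{(i)}$ given the ordered distances (via a polar-coordinate Jacobian computation on the joint density of order statistics), which underlies the variance bound on the directional piece $\sum_i\rho_{n,i}\xi_i$.
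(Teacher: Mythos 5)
Your proposal matches the paper's proof in all essentials: both decompose the $L^2$ error into (a) a radial bias term controlled by a Taylor-type expansion of the conditional mean $\tilde{\eta}(r)=\mathbb{P}(Y=1\mid\|X-X_*\|_2=r)$ in even powers of $r$ with an $O(r^\beta)$ remainder (the key observation being that odd-order terms vanish by spherical symmetry), killed by the OLS reproducing identity $\sum_i\rho_{n,i}g(r_i)=g(0)$ for $g\in\setF(\omega)$, and (b) noise terms bounded via $\sum_i\rho_{n,i}^2=1/(N_n-\zeta_n)\le 1/((1-\varphi)N_n)$ under (C-3), with the $\lnot\event$ contribution bounded by the exponentially small $\mathbb{P}(\lnot\event)$. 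The only cosmetic differences are that the paper introduces an intermediate estimator $\tilde\eta_n(X_*)=\langle\tilde\rho_n,\tilde e_n\rangle$ and conditions on the ordered radii $\bs r$ in a single step, so the directional noise $\xi_i=\eta(X_{(i)})-\tilde\eta(r_i)$ you separate out is absorbed into $\mathbb{V}(Y_{(i)}\mid r_i)\le 1/4$, and that the paper derives the expansion of $\tilde\eta(r)$ by differentiating the ball integral $\integralB(\cdot;r)$ in the shell limit (Propositions \ref{prop:integral}--\ref{prop:explicit_form_of_eta_r}, via Evans--Garzepy) rather than by your direct surface-integral and geometric-series expansion of the reciprocal of $\int_{S^{d-1}}\mu(X_*+r\theta)\,d\sigma(\theta)$; these two derivations are equivalent.
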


See Appendix~\ref{proof:theo:convergence_rate_LRLR} for the proof. 
The obtained upper-bound in Theorem~\ref{theo:convergence_rate_LRLR} is compatible with LPoR with $h=h_n=\tilde{r}_n$~\citep{tsybakov2009introduction}. 
Theorem~\ref{theo:convergence_rate_LRLR} immediately proves the optimal rate of the LRR as follows.

\begin{coro}
\label{coro:LRLR_optimality}
$\mathbb{E}_{\setDn}(\{\eta(X_*)-\hat{\eta}_n(X_*)\}^2) \le C n^{-2\beta/(d+2\beta)}$ with the optimal radius $\tilde{r}_n:=n^{-1/(d+2\beta)}$. 
\end{coro}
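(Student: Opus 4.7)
The plan is to combine a matrix-algebraic representation of the WLS intercept with a symmetry argument that converts the $d$-dimensional smoothness of $\eta\mu$ and $\mu$ into a one-dimensional expansion in even powers of the radius. First I would write, on $\event$, the estimator in (\ref{eq:LRLR_specific}) as an OLS intercept with design matrix $D=[\mathbf{1}_{N_n}\;\tilde R_n]$: $\hat{\eta}_n(X_*)=\sum_{i=1}^{N_n}\rho_{n,i}Y_{(i)}$ with $\rho=D(D^\top D)^{-1}e_1$. Block inversion yields $\|\rho\|_2^2=((D^\top D)^{-1})_{11}=1/(N_n-\zeta_n)$, and the normal equations $D^\top\rho=e_1$ deliver the polynomial-annihilation identities
\[
\sum_{i=1}^{N_n}\rho_{n,i}=1,\qquad \sum_{i=1}^{N_n}\rho_{n,i}\,r_i^{2c}=0 \quad (c=1,\dots,\omega).
\]
Condition (C-3) then upgrades these to $\|\rho\|_2\le((1-\varphi)N_n)^{-1/2}$ and, via Cauchy--Schwarz, $\|\rho\|_1\le(1-\varphi)^{-1/2}=O(1)$.

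Next I would identify what the LRR is estimating in the radial variable. Set $\psi(r):=\mathbb{E}[\eta(X)\mid\|X-X_*\|_2=r]$. Because, conditional on the (unordered) multiset of distances, the corresponding covariates are independent with angular densities proportional to $\mu(X_*+r_iv)$ on $S^{d-1}$, we have $\mathbb{E}[Y_{(i)}\mid\{r_j\}]=\psi(r_i)$ and
\[
\psi(r)=\frac{\int_{S^{d-1}}(\eta\mu)(X_*+rv)\,dv}{\int_{S^{d-1}}\mu(X_*+rv)\,dv}.
\]
Plugging the $\beta$-H\"older expansions from (C-2) into numerator and denominator, every monomial $v^\alpha$ with $|\alpha|$ odd integrates to $0$ over $S^{d-1}$, so only even-degree terms survive and each integral takes the form $A_0+\sum_{c=1}^{\omega}A_cr^{2c}+O(r^\beta)$. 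Since $B_0=\mu(X_*)|S^{d-1}|\ge l|S^{d-1}|>0$ by (C-1), the ratio expands as
\[
\psi(r)=f^*(r)+\varepsilon_\psi(r),\qquad f^*\in\setF,\ f^*(0)=\eta(X_*),\ |\varepsilon_\psi(r)|\le C'r^{\beta},
\]
because any higher even powers produced by the geometric-series expansion of $1/(B_0+\cdots)$ exceed degree $2\omega$ and, since $2(\omega+1)>\beta$ and $r\le R<1$, are absorbed into the $O(r^\beta)$ remainder.

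With these two preparations in hand, the risk bound follows from a bias--variance split on $\event$ plus a negligible contribution from $\lnot\event$. The conditional bias becomes
\[
\sum_i\rho_{n,i}\psi(r_i)-\eta(X_*)=\sum_i\rho_{n,i}[f^*(r_i)-f^*(0)]+\sum_i\rho_{n,i}\varepsilon_\psi(r_i)=\sum_i\rho_{n,i}\varepsilon_\psi(r_i)
\]
by the polynomial-annihilation identities, whose modulus is at most $C'\|\rho\|_1\tilde r_n^{\beta}=O(\tilde r_n^{\beta})$, so the squared bias is $O(\tilde r_n^{2\beta})$. The conditional variance given $\{X_{(i)}\}$ equals $\sum_i\rho_{n,i}^2\eta(X_{(i)})(1-\eta(X_{(i)}))\le\tfrac14\|\rho\|_2^2\le1/(4(1-\varphi)N_n)$; taking the outer expectation via a Chernoff bound on $N_n/\overline N_n$ together with $\overline N_n\ge c\,n\tilde r_n^d$ from (C-1) yields a variance contribution of order $1/(n\tilde r_n^d)$. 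The contribution of $\lnot\event$ is at most $\mathbb{P}(\lnot\event)\eta(X_*)^2$, which by (C-3) and the same Chernoff bound on $N_n$ is $O(e^{-cn^\tau})$, dominated by the two nonparametric rates. The main obstacle will be the symmetry argument of the second paragraph: one must carefully verify that, after spherical integration and division by $\mu(X_*)+O(r^2)$, the $\beta$-H\"older error in both numerator and denominator propagates into a genuine $O(r^\beta)$ error for $\psi$, and that the surviving even-degree polynomial indeed lies in $\setF(\omega)$ with intercept exactly $\eta(X_*)$.
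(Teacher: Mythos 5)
Your proposal is correct, but it reproves Theorem~\ref{theo:convergence_rate_LRLR} from scratch rather than invoking it: the paper's own proof of the corollary is the one-line observation that with $\tilde r_n = n^{-1/(d+2\beta)}$ the two competing terms in the theorem's bound $C\max\{\tilde r_n^{2\beta},\,1/(n\tilde r_n^d)\}$ coincide, each equalling $n^{-2\beta/(d+2\beta)}$. You rederive that bound in full (yet never carry out the balancing substitution, which is the only content the corollary adds over the theorem), and your derivation follows the same plan as the paper's Appendix~\ref{proof:theo:convergence_rate_LRLR}: the annihilation identities $\sum_i\rho_{n,i}=1$, $\sum_i\rho_{n,i}r_i^{2c}=0$ for the weight vector $\tilde\rho_n$; the estimate $\|\tilde\rho_n\|_2^2=1/(N_n-\zeta_n)\le 1/((1-\varphi)N_n)$ from (C-3); a radial expansion of $\tilde\eta(r)$ as an even-degree polynomial in $\setF(\omega)$ plus an $O(r^\beta)$ remainder; a bias--variance split; and Chernoff control of $N_n$ matching Propositions~\ref{prop:tail_Nn} and~\ref{prop:prob_Nnomega}. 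The substantive difference is how you obtain the radial expansion: you integrate the H\"older expansions of $\eta\mu$ and $\mu$ directly over the sphere $S^{d-1}$, using odd-power cancellation and then expanding the ratio, whereas the paper's Propositions~\ref{prop:integral}--\ref{prop:explicit_form_of_eta_r} reach the same conclusion via the $\delta\searrow 0$ limit of ball-integral differences $\{\integralB(\cdot;r+\delta)-\integralB(\cdot;r-\delta)\}/\delta$. The two are equivalent, and the spherical formulation is arguably cleaner, though you should check (as the paper does via uniform boundedness of the coefficient functions) that the geometric-series remainder is controlled uniformly over $r\in[0,R/2]$, not only as $r\searrow 0$. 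One small internal inconsistency in your sketch: you compute the conditional bias given the radii $\{r_j\}$ but the conditional variance given the full covariates $\{X_{(i)}\}$; the paper conditions on $\{r_j\}$ in both parts, yielding $\mathbb V(Y_{(i)}\mid r_i)=\tilde\eta(r_i)\{1-\tilde\eta(r_i)\}\le 1/4$ and a single consistent decomposition. Your numerical bound is unaffected, but the two conditionings should match inside one bias--variance identity.
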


The obtained rate $n^{-2\beta/(d+2\beta)}$ is optimal in estimating $\beta$-H{\"o}lder functions~(see, e.g., \citet{tsybakov2009introduction}), and this rate is compatible with the rates for LPoR and MS-$k$-NN:
\[
    \mathbb{E}_{\setDn}(\{\eta(X_*)-\hat{\eta}_{h_n}^{(\text{LPoR})}\}^2) , \quad
    \mathbb{E}_{\setDn}(\{\eta(X_*)-\hat{\eta}_{k_{n,1},k_{n,2},\ldots,k_{n,J}}^{(\text{MS}k\text{NN})}\}^2) \quad 
    \le 
    \quad 
    C n^{-2\beta/(d+2\beta)}
\]
for some $C>0$, with the optimal bandwidth $h_n := n^{-1/(d+2\beta)}$ and $k_{n,j}:=\min\{k=1,2,\ldots,n \mid \|X_{(k)}-X_*\|_2 \ge \ell_j r_1\}$ ($j=1,2,\ldots,J$), $k_{n,1} := n^{2\beta/(2\beta+d)},r_1:=\|X_{(k_{n,1})}-X_*\|_2$ and $1=\ell_1<\ell_2<\ell_3<\cdots<\ell_J<\infty$. 
See \citet{tsybakov2009introduction} and \citet{okuno2020extrapolation} for details.

Note that, Corollary~\ref{coro:LRLR_optimality} proves the point-wise convergence at the fixed query $X_*$ in the sense of $L^2$-risk (i.e., regression), while \citet{okuno2020extrapolation} also provides the uniform convergence in the sense of misclassification error (i.e., classification) using the plug-in type classifier~(\ref{eq:plug-in-classifier}). 
See \citet{audibert2007learning} for a relation between the convergence rates of the $L^2$-risk and the misclassification error. 

\section{Experiments on Synthetic Datasets}
\label{sec:experiments}

In this section, we conduct numerical experiments on synthetic datasets. We compare the plug-in classifier~(\ref{eq:plug-in-classifier}) of the proposed LRLR to those of logistic regression, $k$-NN (which is also compatible with kernel smoother), LPoR, a logistic regression variant of LPoR~(LPoLR), and MS-$k$-NN.

\subsection{Experimental Setting}

\begin{itemize}
    \item \textbf{Datasets}: 
    Let $n_{\text{train}}=n_{\text{test}}=500$ be number of observations in training and test datasets, and let $d=3$. 
    For training sets, we randomly generate $X_i=(x_{ij}) \in \mathbb{R}^d$ with $x_{ij} \overset{\text{i.i.d.}}{\sim} U[-1,1]$, and $Y_i \mid X_i \overset{\text{i.i.d.}}{\sim} \text{Be}(\rho(\eta(X_i)+\varepsilon_i))$ with $\varepsilon_i \overset{\text{i.i.d.}}{\sim} N(0,0.05^2)$, 
    $\rho(z):=\argmin_{z' \in [0,1]}\|z-z'\|_2$ and 
    \begin{align}
        \eta(X_i):=15\prod_{j=1}^{d}\phi\left(2(x_{ij}-1/2)\right) + 15\prod_{j=1}^{d}\phi\left(2(x_{ij}+1/2)\right) 
        \in [0,1],
    \label{eq:groundtruth_synthetic}
    \end{align}
    for $i=1,2,\ldots,n_{\text{train}}$.
    $\text{Be}(\mu)$ denotes Bernoulli distribution with mean $\mu \in [0,1]$ and $\phi$ denotes the probability density function of univariate standard normal distribution. 
    For test sets, we randomly generate $X_i=(x_{ij})$ with $x_{ij}\overset{\text{i.i.d.}}{\sim} U[-0.7,0.7]$ and 
    $Y_i \mid X_i \overset{\text{i.i.d.}}{\sim} \text{Be}(\eta(X_i))$ for $i=1,2,\ldots,n_{\text{test}}$. 
    
    \item \textbf{Evaluation}: 
    we take average of the concordance rates of each plug-in classifier to (i) randomly generated test labels $Y_i^{(\text{test})} \in \{0,1\}$ and the (ii) optimal Bayes classifier $g(X_i^{\text{(test)}}) \in \{0,1\}$, over $200$ times experiments. 
    
    \item \textbf{Baselines}: 
    We compute $k$-NN for each $k=k_j$ and MS-$k$-NN (logi.) with $\bs k$ ($J=5$) 
    for $\bs k=(k_1,k_2,k_3,k_4,k_5)=(10,20,30,40,50)$. 
    We also compute LPoR and the logistic regression variant of LPoR (denoted by LPoLR, see \citet{lee2006conditional}) with bandwidth $h=0.4$.
    Kernel smoother is not listed in the baselines, as it is compatible with $k$-NN as explained in Section~\ref{subsec:KS_and_kNN}. 
    In addition to those methods described in the previous sections, we include
    the random prediction: random predictor outputs $1$ and $0$ with probability $1/2$ and $1/2$, respectively, regardless of the query $X_*$. 
    As a representative of parametric discrimination method, we used  the logistic regression.
    
    \item \textbf{Parametric regression function}: 
    LPoR employs the polynomial function of degree $2$. 
    Logistic regression, MS-$k$-NN (logi.), LPoLR, and LRLR employ the polynomial of degree $2$ with the sigmoid function. 
    In particular, MS-$k$-NN (logi.) in this section optimizes the regression function $v(r)$ by minimizing the logistic loss $\sum_{j=1}^{J}\ell_{\text{logi.}}(\hat{\eta}_{k_j}^{(k\text{NN})},v(r_{k_j}))$ defined with $\ell_{\text{logi.}}(\eta,v)=\eta \log v + (1-\eta) \log (1-v)$, but not the squared loss $\ell(\eta,v)=(\eta-v)^2$ used in (\ref{eq:msknn})
\end{itemize}

\subsection{Results} 

Experimental results are shown in Table~\ref{table:results_synthetic}. For both (i) and (ii), the proposed LRLR outperforms the remaining existing methods. In particular, LRLR with constant weight function $w(r)=1$ shows the best score, and subsequently, LRLR with the decreasing weight $w(r)=1/r$ is almost the second best. 
This classification problem is particularly difficult for extrapolation-based approaches and logistic regression, as the ground-truth function $\eta$ is bi-modal. 

The scatter plot matrix of estimators in an instance of the above experiment is also shown in Figure~\ref{fig:scatter}: we can see the detailed correlation between each pair of regression methods. For instance, we can examine the high correlation coefficient between $k$-NN and MS-$k$-NN (logi.). 
The logistic regression does not perform well as its expressive power is limited to approximate the bi-modal ground-truth function (\ref{eq:groundtruth_synthetic}). 
LPoR is hard to compute stably, as its regression function is not restricted to take value within $[0,1]$ and it uses raw $d$-dimensional covariates; LPoR and LPoLR tend to be unstable. 
LRLR is stably computed, and the correlation coefficient to the ground-truth is the highest among all the methods. 

Further experiments with more realistic settings (including the selection of parameters by cross-validation) are shown in the following Section~\ref{sec:application}.

\begin{table}[!ht]
\centering
\caption{Concordance rates to (i) randomly generated test labels, (ii) optimal Bayes classifier outputs, in $200$ times experiments. 
Average and standard error are listed. 
A higher score is better: the best and the second-best are bolded and underlined, respectively.}
\label{table:results_synthetic}
\scalebox{0.9}{
\begin{tabular}{llcc}
\toprule 
\multicolumn{2}{c}{Method} & (i) Test Labels & (ii) Optimal Bayes Clas. \\
\midrule 
Random & & $0.500 \pm 0.002$ & $0.500 \pm 0.002$ \\
Logistic & & $0.625 \pm 0.002$ & $0.732 \pm 0.003$ \\
\multirow{5}{*}{$k$-NN} & ($k=10$) & $0.698 \pm 0.002$ & $0.866 \pm 0.002$ \\
& ($k=20$) & $0.705 \pm 0.002$ & $0.888 \pm 0.002$ \\
& ($k=30$) & $0.706 \pm 0.002$ & $\second{0.894} \pm 0.002$ \\
& ($k=40$) & $0.705 \pm 0.002$ & $0.891 \pm 0.002$ \\
& ($k=50$) & $0.702 \pm 0.002$ & $0.883 \pm 0.002$ \\
MS-$k$-NN~(logi.) & & $0.569 \pm 0.002$ & $0.612 \pm 0.002$ \\
LPoR & ($h=0.4$) & $0.551 \pm 0.002$ & $0.603 \pm 0.002$ \\
LPoLR& ($h=0.4$) & $0.569 \pm 0.002$ & $0.629 \pm 0.002$ \\
\rowcolor{gray!10}
& ($w(r)=1$) & $\first{0.716} \pm 0.002$ & $\first{0.910} \pm 0.002$ \\
\rowcolor{gray!10}
\multirow{-2}{*}{LRLR}
& ($w(r)=1/r$) & $\second{0.707} \pm 0.002$ & $0.881 \pm 0.002$ \\
\bottomrule
\end{tabular}
}
\end{table}

\begin{figure}[!ht]
\centering 
\includegraphics[width=0.9\textwidth]{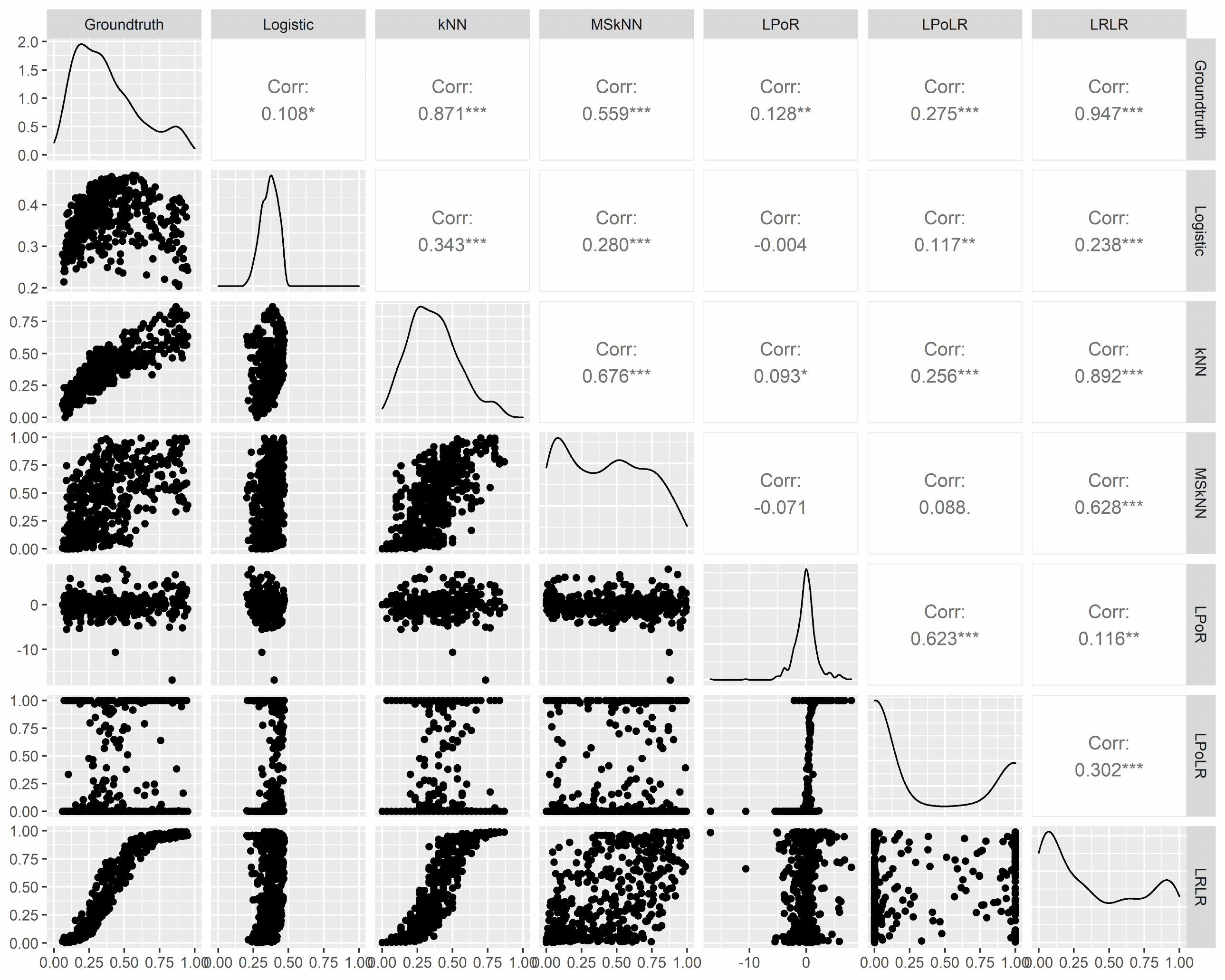}
\caption{Scatter plot matrix of predicted label probabilities: ground-truth $\eta$, logistic regression, $k$-NN with $k=30$, MS-$k$-NN with $\bs k=(10,20,30,40,50)$, LPoR with $h=0.4$, and LRLR with $w(r)=1/r$.}
\label{fig:scatter}
\end{figure}

\section{Application to Predictive Month-End Closing Price Classification}
\label{sec:application}

In this section, we conduct experiments on real-world datasets, that predict the rise or fall of eight major world stock indices over a period from 1989 to 2021. 
Experiments in this section basically follow \citet{nakagawa2018stock}, which conducts similar experiments with the stock indices before 2018.

\subsection{Overview}
\label{subsec:overview}

In our experiments, we consider classifying the month-end closing prices of eight major world stock indices, S\&P500~(U.S.), S\&P/TSX~(Canada), FTSE100~(U.K.), DAX~(German), CAC40~(France), Euro Stoxx 50~(EU), TOPIX~(Japan), and Hang Seng Index~(Hong Kong), obtained from Jan. 1989 to Oct. 2021. 
The covariate $X_i \in \mathbb{R}_+^{d_i}$ denotes the concatenation of daily closing prices in a month indexed by $i$, and the response $Y_i \in \{0,1\}$ denotes whether the closing price at the last day in the month $i+1$ rises or falls from the month $i$. 
Here $d_i$ denotes the number of days the market is open, which varies from month to month.
Given an observed covariate $X_* \in \mathbb{R}_+^{d_*}$ of a query month, we predictively classify whether the month-end closing price rises or not.  
See Section~\ref{subsec:experimental_setting} for more details, and also see Figure~\ref{fig:overview_stock_price_classification} for illustration of this classification problem.

\begin{figure}[!ht]
\centering
\includegraphics[width=0.8\textwidth]{./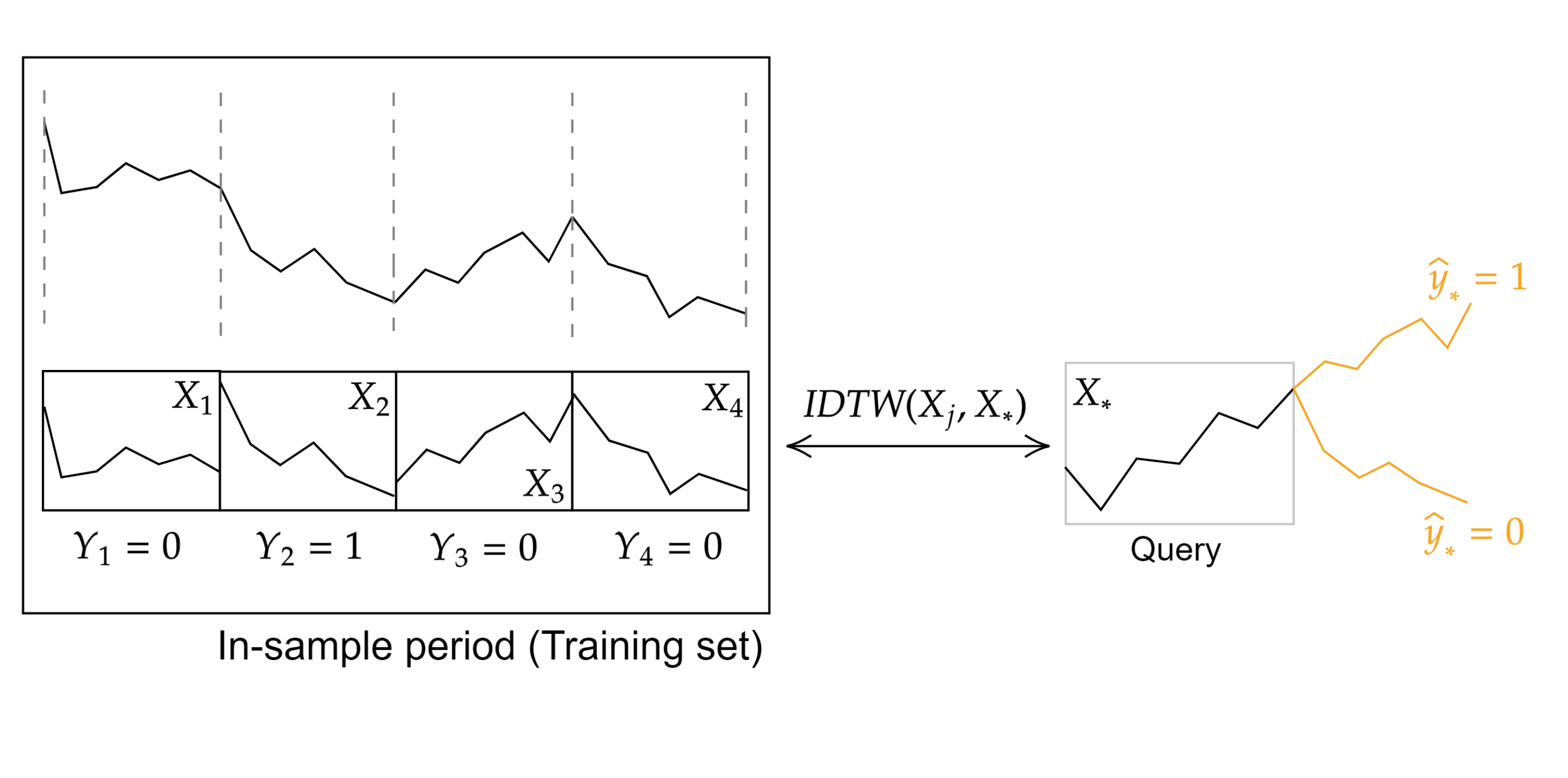}
\caption{Overview of the predictive month-end closing price classification. 
$X_i=(x_{i1},x_{i2},\ldots,x_{i d_{i}}) \in \mathbb{R}_+^{d_i}$ concatenates daily closing prices in the month $i$, and $Y_i \in \{0,1\}$ represents whether the month-end closing price of the month $i+1$ rises or falls from that of the month $i$. 
We classify the query $X_* \in \mathbb{R}_+^{d_*}$, whether the price would rise or fall in the next month.}
\label{fig:overview_stock_price_classification}
\end{figure}

An interesting point of these datasets is that each observed covariate $X_i$ has the individual dimensionality $d_i$, so we cannot compute simple distances (e.g., Euclidean distance) between  $X_i$ and $X_j$ with different dimensionalities $d_i \ne d_j$. 
To measure their discrepancy, we employ the dynamic time warping~(DTW; \citet{bellman1959adaptive} and \citet{Muller2007} Section 4); $\dtw(X_i,X_j)$ denotes the Euclidean distance between the best-aligned vectors, which may be computed by the dynamic programming algorithm.
For comparing daily stock prices, we first rescale $X_i$ and $X_j$ so that their first elements become one: $\tilde X_i := X_i/x_{i1}$ and  $\tilde X_j := X_j/x_{j1}$.
We then apply DTW to the rescaled vectors; this method is called indexing DTW~\citep[IDTW;][]{nakagawa2018stock}, denoted as
 $\idtw(X_i,X_j) := \dtw(\tilde X_i, \tilde X_j)$.
We replace the radial distance (\ref{eq:radius}) by 
\begin{align}
    r_i := \idtw(X_{(i)},X_*)
    \label{eq:idtw_radial}
\end{align}
satisfying $0 \le r_1 \le r_2 \le \cdots \le r_{n_{\text{train}}}$, yielding IDTW variants of $k$-NN, multiscale $k$-NN and LRLR. 
The simple logistic regression, LPoR, and LPoLR cannot be computed for these datasets, as their regression functions require covariates of the same dimensionality $d$.

\subsection{Experimental Setting}
\label{subsec:experimental_setting}

\begin{itemize}
\item \textbf{Datasets}: Daily closing price data of major world stock indices S\&P 500~($8273$ days), S\&P/TSX~($8255$ days), EURO STOXX 50 ($8455$ days), FTSE 100~($8298$ days), DAX~($8302$ days), CAC 40~($8331$ days), TOPIX~($8065$ days) and Hang Seng ($8108$ days) are obtained from Bloomberg terminal
\footnote{Bloomberg terminal is a computer software provide by Bloomberg L.P.~(\url{https://about.bloomberg.co.jp/}). We retrieve the datasets in November, 2021.}. 
Ticker codes for these indices are SPX, SPTSX, SX5E, UKX, DAX, CAC, TPX, and HSI, respectively in this order, and the data period of all indices was from Jan. 1989 to Oct. 2021. 

For each index, the dataset is divided by month, with the first month being Jan.~1989 and the 394th month being Oct.~2021: in each month indexed by $i$, the daily closing prices of length $d_i$ are concatenated to form an observed covariate vector $X_i=(x_{i1},x_{i2},\ldots,x_{id_i}) \in \mathbb{R}^{d_i}$. For each index, 
the mean value of $d_i$ with its standard deviation is
$21.0 \pm 1.2$ for S\&P 500, 
$21.0 \pm 1.0$ for S\&P/TSX, 
$21.5 \pm 1.1$ for EURO STOXX 50, 
$21.1 \pm 1.2$ for FTSE 100, 
$21.1 \pm 1.3$ for DAX, 
$21.1 \pm 1.2$ for CAC 40, 
$20.5 \pm 1.2$ for TOPIX and 
$20.6 \pm 1.4$ for Hang Seng.  
For each month, we let the label $Y$ be 1 if the next month-end closing price increases from the current month and 0 otherwise.

\item \textbf{Prediction}:
We consider predicting the next month-end prices of $n_{\text{test}}=202$ months from Jan.~2005 to Oct.~2021 as the test set, by leveraging the most recent $n_{\text{train}}=192$ months from each test month as the training set. 
For instance, let the query $X_*$ be the daily closing prices in Jan.~2010: we predict whether the month-end closing price of Feb.~2010 rises or falls by leveraging the most recent $192$ months, i.e., those from Jan.~1994 to Dec.~2009. 
Note that the prediction always employs the most recent $192$ months but not the fixed $192$ months. 
The above query, representing the prices in Jan. 2010, is also used as a training sample for predicting the prices of the subsequent months. 

\item \textbf{Evaluation by accuracy}: All classifiers are evaluated by accuracy, i.e., the concordance rate between predicted labels and test labels.

\item \textbf{Evaluation by cumulative return for virtual trading}: 
Let $t \in \{ 1,2,\ldots,n_\text{test}\}$ be the month index for the test set; $t=1$ corresponds to Jan.~2005.
For month $t$, buy the stock index if the predicted label for the month $t$ (indicating whether the month-end price of the month $t+1$ would rise or fall) is $1$, and sell it otherwise. The return for $t$-th month from the virtual trading is
$$
    (\text{Buy}) \quad R_t=1+\frac{e_{t+1}-e_t}{e_t}, 
    \hspace{3em}
    (\text{Sell}) \quad R_t=1-\frac{e_{t+1}-e_t}{e_t},
$$
where $e_t$ denotes the stock index price at month-end of month $t$. 
The cumulative return $\prod_{\tau=1}^t R_{\tau}$ up to month $t=1,2,\ldots,n_{\text{test}}$ is used as a score~\citep{shen2015portfolio}. 
A higher return is better.

\item \textbf{Parametric regression function}: MS-$k$-NN (poly.) employs the polynomial function of degree $2$: $\theta_0+\theta_1 r+\theta_2 r^2$. 
MS-$k$-NN (logi.) and LRLR employ polynomial function of degree $2$ with the sigmoid function $\sigma(z)=(1+\exp(-z))^{-1}$: $\sigma(\theta_0'+\theta_1'r+\theta_2'r^2)$. 
In particular, 
MS-$k$-NN~(logi.) optimizes the regression function by minimizing a variant of the squared loss function $\sum_{j=1}^{J}\left\{\sigma^{-1}\big(\hat{\eta}_{k_j}^{(k\text{NN})} (X_*)\big) - \sigma^{-1}\big(v_{\text{logi.}}(r_{k_j})\big)\right\}^2$, using the logit function $\sigma^{-1}(z)=\log\frac{z}{1-z}$.

\item \textbf{Hyper-parameter tuning}: In order to determine hyper-parameters in $k$-NN and MS-$k$-NN (poly., logi.), 
we perform a walk forward testing~\citep{katz2000encyclopedia}.
Specifically, for month $t$ in the test set period, the parameter that yields the maximum accuracy on months $t-24$ to $t-1$ is chosen, while months $t-192, t-191, \ldots, t-25$ are used for the actual training.
We select parameter $k$ in $k$-NN from $\{1,2,\cdots,30\}$. As to MS-$k$-NN (poly., logi.), instead of choosing $\boldsymbol{k}=(k_1,k_2,\cdots,k_J)$, let $k_1=5$, $J=5$, and only one parameter $k_{\max}$ be selected from $\{20,30,50,80,120\}$. $k_2,k_3,\cdots,k_J$ are specified by $k_j:=k_1+\left\lfloor(j-1)(k_{\max}-k_1)/(J-1)\right\rfloor$ such that $\{k_j\}_{j=1}^J$ is an arithmetic sequence.
\end{itemize}

\subsection{Results}
The accuracy and the cumulative return for virtual trading are shown in Table~\ref{table:real_accuracy} and Figure~\ref{fig:real_cumulative_return}, respectively. Note that for random prediction, the accuracy and cumulative return are both the average of corresponding results obtained by 30 experiments.

As to accuracy, both LRLR with $w(r)=1$ and $w(r)=1/r$ outperform random prediction, $k$-NN, MS-$k$-NN (poly.) and MS-$k$-NN (logi.) on S\&P 500, S\&P/TSX, FTSE 100, DAX, CAC 40 and Hang Seng Index, while both of them demonstrate lower performance than some baselines on EURO STOXX 50 and TOPIX. On all stock indices, LRLR with $w(r)=1/r$ achieves higher or the same performance than $w(r)=1$. This result shows that $w(r)=1/r$ is superior to $w(r)=1$ for LRLR in terms of accuracy.

According to the cumulative return displayed in Figure~\ref{fig:real_cumulative_return}, LRLR with $w(r)=1/r$ always performs better or the same as LRLR with $w(r)=1$, while it also outperforms random prediction, $k$-NN, MS-$k$-NN (poly.) and MS-$k$-NN (logi.) on S\&P 500, S\&P/TSX, FTSE 100, DAX, CAC 40 and Hang Seng Indices. These are consistent with the results obtained from the comparison of accuracy.

While LRLR outperforms existing methods for most indices, their scores are degraded for EURO STOXX~(EU) and TOPIX~(Japan): 
EURO STOXX is distinct from other indices as it combines indices for a number of countries in the EU, and the Japanese market is often considered to be different from the international market. 
See \citet{fama2012size}.

\begin{table}[!ht]
\centering
\caption{Predictive classification accuracy. A higher score is better: the best and the second-best are bolded and underlined, respectively.}
\label{table:real_accuracy}
\scalebox{0.95}{
\begin{tabular}{lcccccccc}
\toprule
                    & S\&P 500 & S\&P/TSX & EURO S. 50 & FTSE 100 & DAX & CAC 40 & TOPIX & Hang Seng  \\
\midrule
random              & $0.492$ & $0.495$ & $0.498$   & $0.482$ & $0.492$ & $0.490$ & $0.493$         & $0.486$ \\
$k$-NN              & $0.574$ & $0.594$ & $0.510$  & $0.500$ & $0.530$ & $0.525$ & $0.500$ & $0.564$ \\
MS-$k$-NN (poly.)           & $0.604$ & $0.559$ & $\first{0.525}$           & $0.485$ & $0.545$ & $0.495$ & $\second{0.515}$ & $0.530$ \\
MS-$k$-NN (logi.)  & $0.604$ & $0.574$ & $0.510$ & $0.490$ & $0.525$ & $0.485$ & $\first{0.535}$ & $0.530$ \\
\rowcolor{gray!10}
LRLR ($w(r)=1$)     & $\first{0.649}$ & $\second{0.609}$ & $0.505$  & $\second{0.574}$  & $\first{0.609}$ & $\second{0.550}$ & $0.465$ & $\first{0.574}$ \\
\rowcolor{gray!10}
LRLR ($w(r)=1/r$)   & $\first{0.649}$ & $\first{0.619}$ & $\second{0.515}$   & $\first{0.584}$   & $\first{0.609}$ & $\first{0.554}$  & $0.475$ & $\first{0.574}$ \\
\bottomrule
\end{tabular}
}
\end{table}

\begin{figure}[p]
\centering 
\begin{minipage}{0.48\textwidth}
\centering
\includegraphics[width=0.85\textwidth]{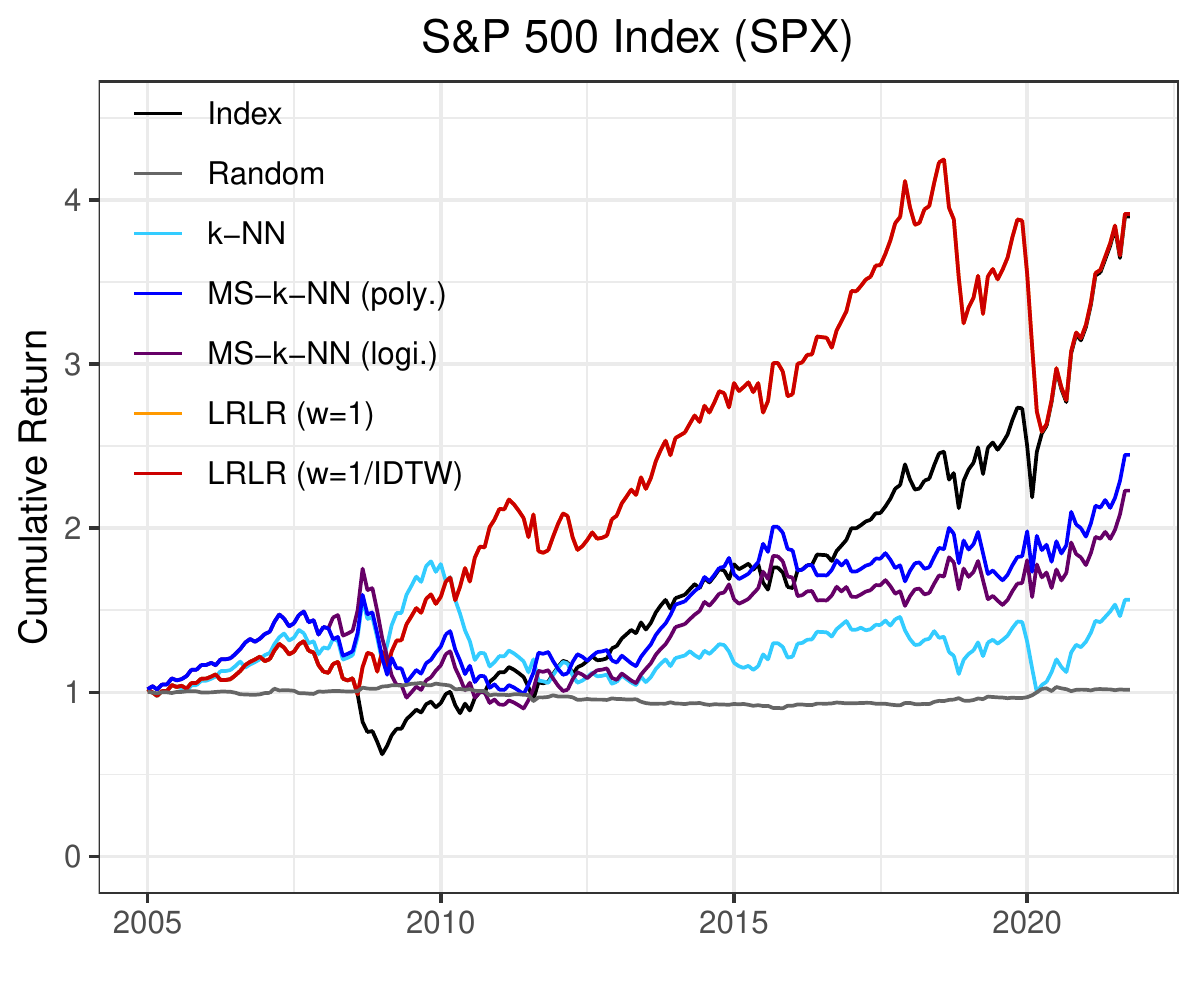}
\end{minipage}
\begin{minipage}{0.48\textwidth}
\centering 
\includegraphics[width=0.85\textwidth]{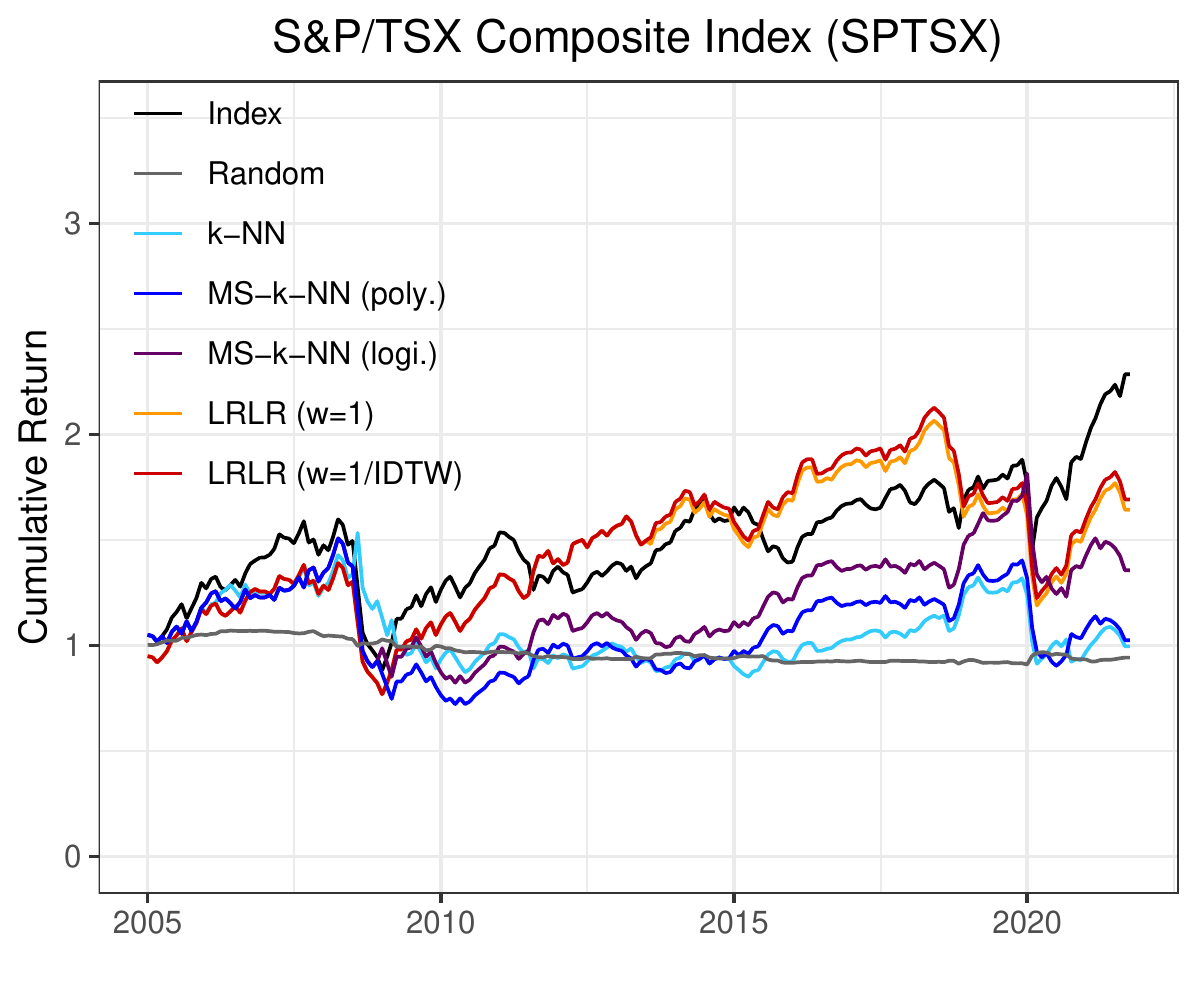}
\end{minipage} \\
\begin{minipage}{0.48\textwidth}
\centering
\includegraphics[width=0.85\textwidth]{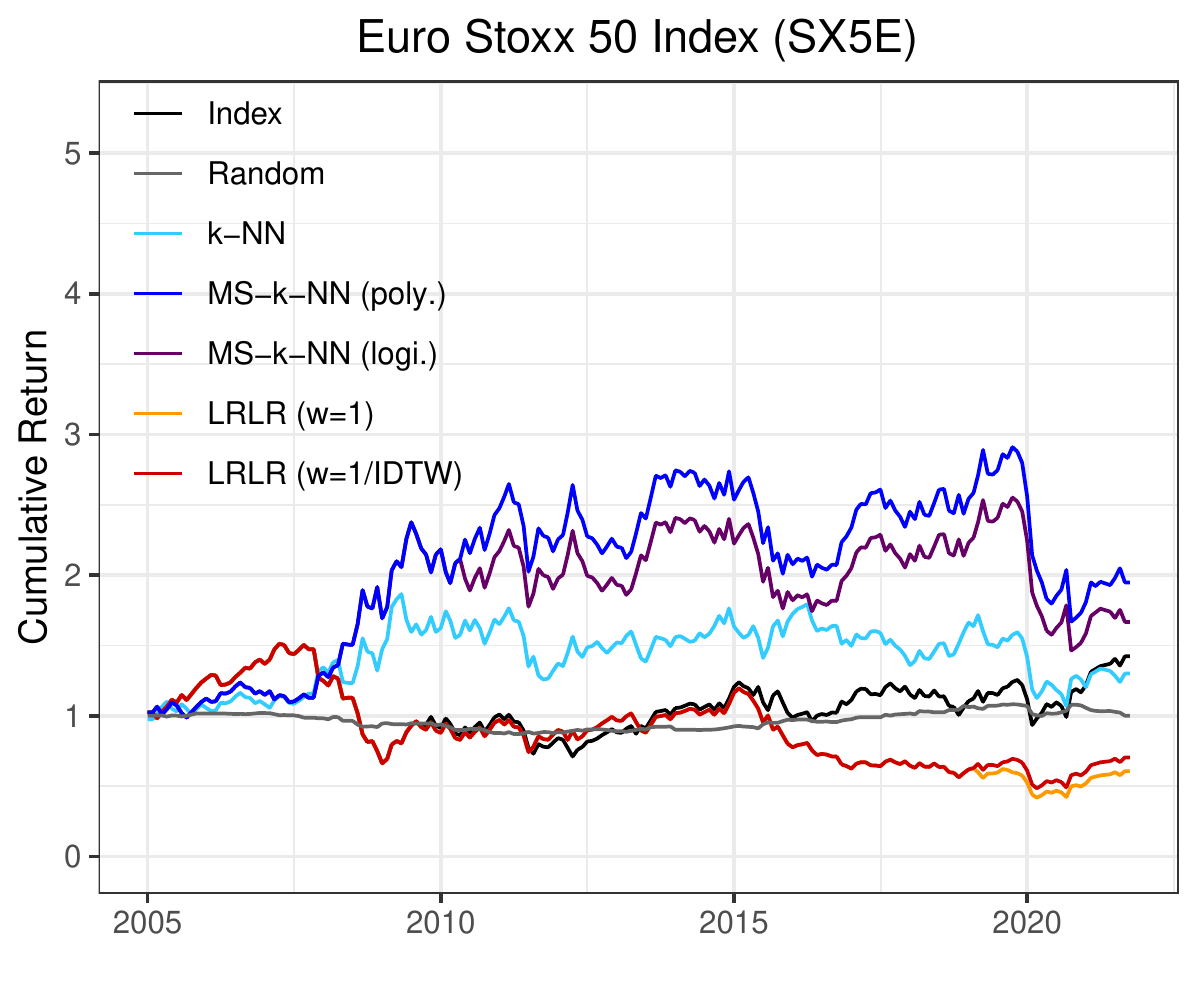}
\end{minipage}
\begin{minipage}{0.48\textwidth}
\centering 
\includegraphics[width=0.85\textwidth]{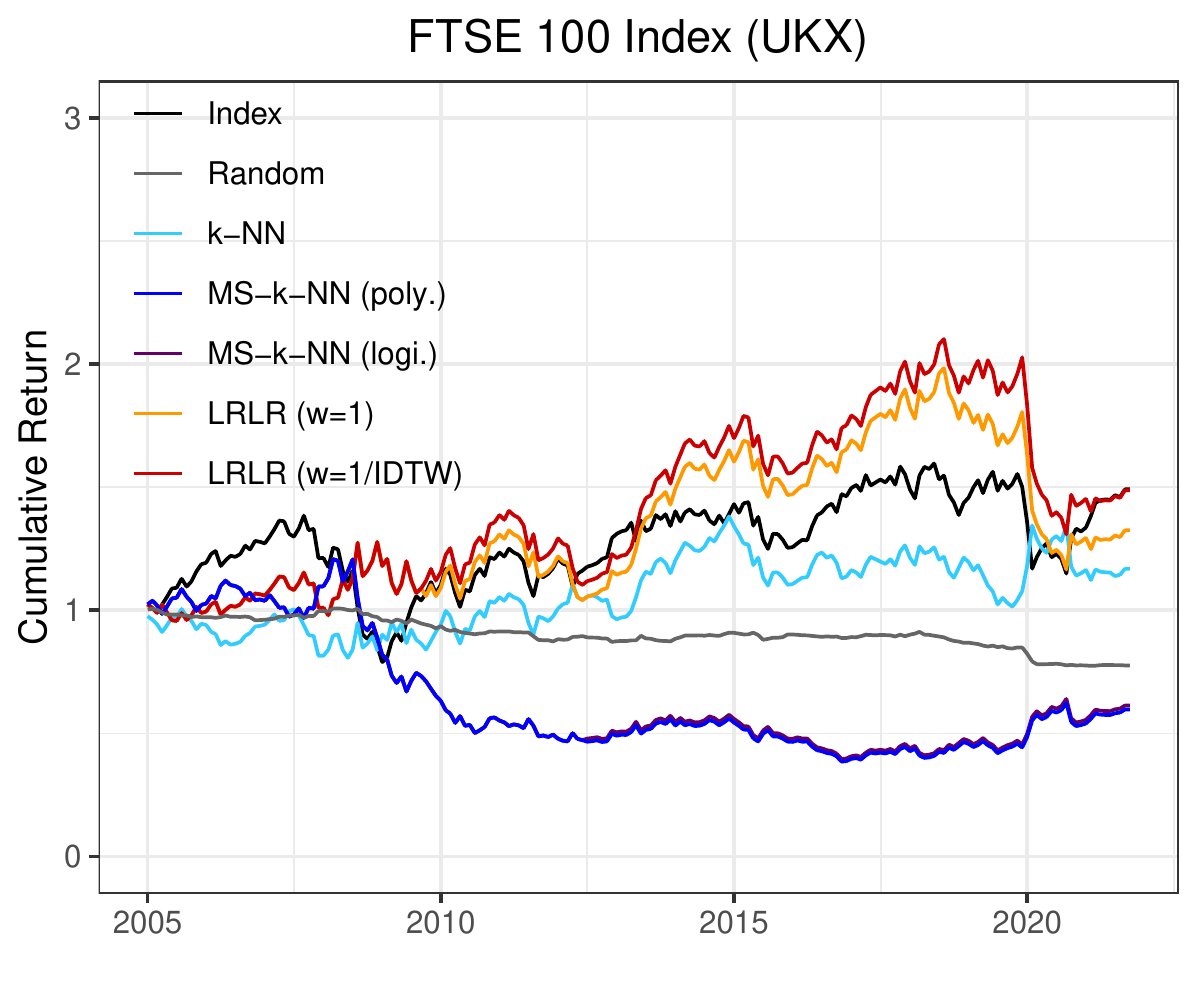}
\end{minipage} \\
\begin{minipage}{0.48\textwidth}
\centering
\includegraphics[width=0.85\textwidth]{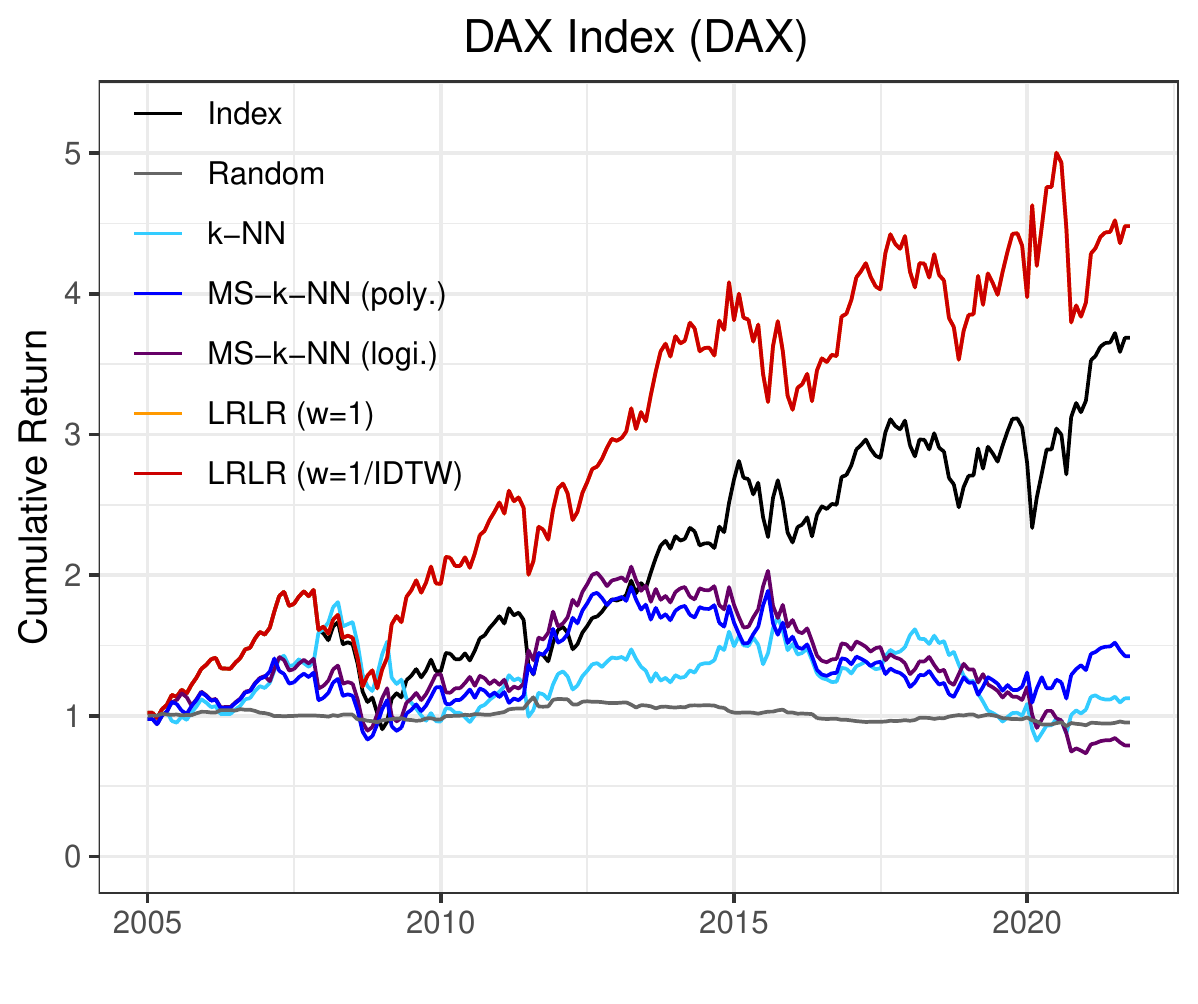}
\end{minipage}
\begin{minipage}{0.48\textwidth}
\centering 
\includegraphics[width=0.85\textwidth]{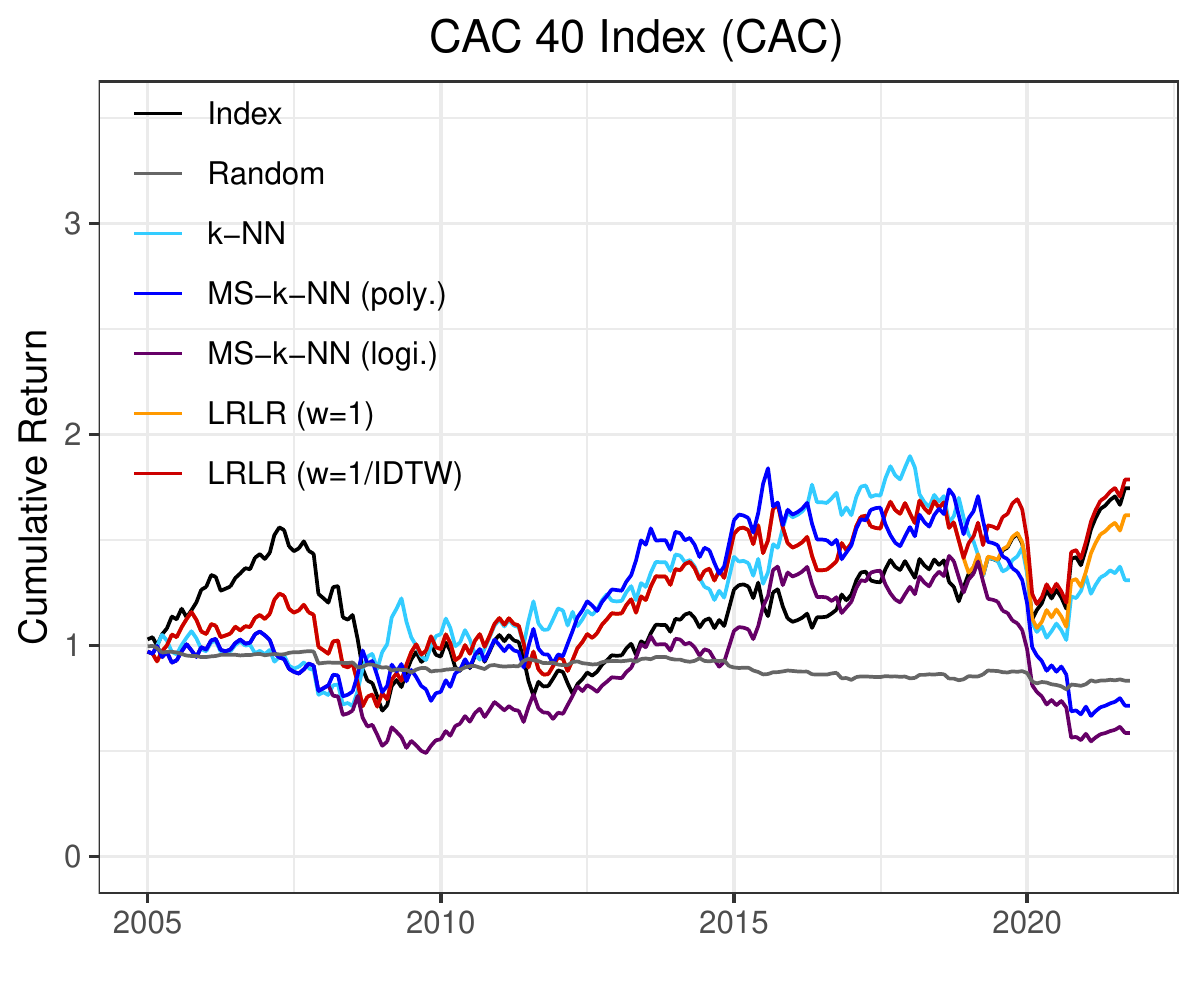}
\end{minipage} \\
\begin{minipage}{0.48\textwidth}
\centering
\includegraphics[width=0.85\textwidth]{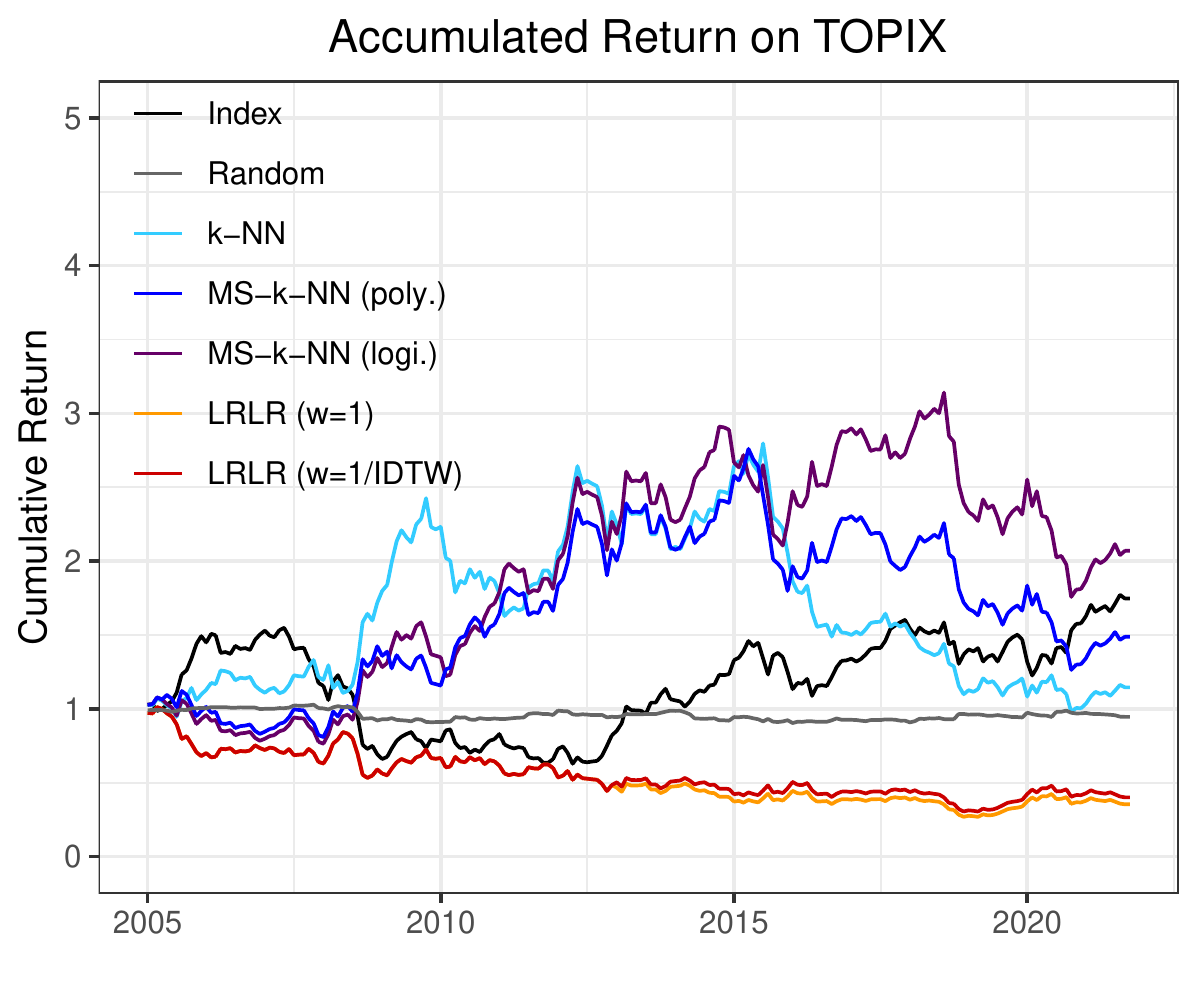}
\end{minipage}
\begin{minipage}{0.48\textwidth}
\centering 
\includegraphics[width=0.85\textwidth]{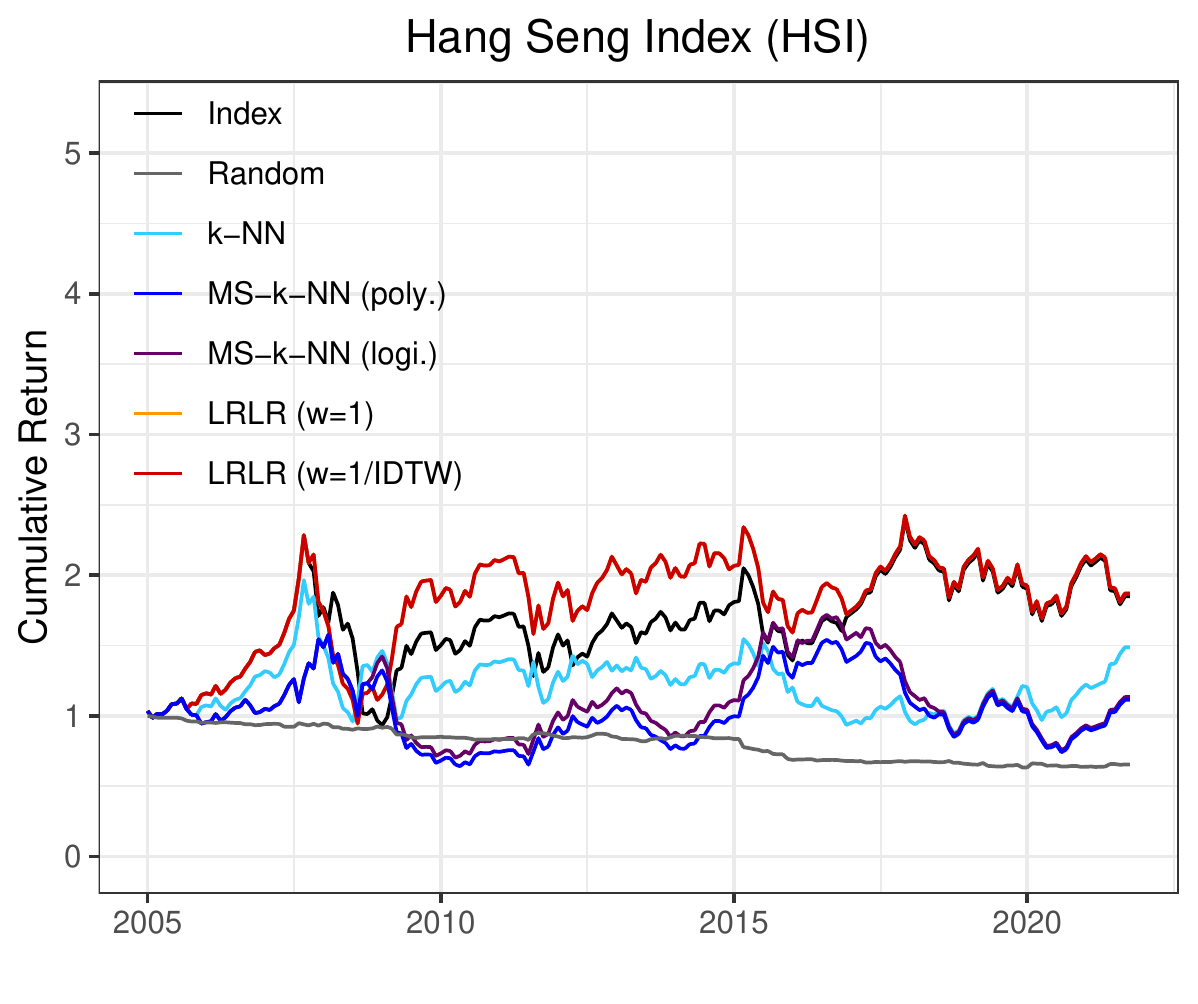}
\end{minipage} 
\caption{Cumulative return for virtual trading. A higher return is better. The black line (Index) shows the cumulative return when simply holding the index from the beginning of the test period: Jan. 2005.}
\label{fig:real_cumulative_return}
\end{figure}

\section{Conclusion}
\label{sec:conclusion}

We proposed a local radial regression~(LRR) and its logistic regression variant called a local radial logistic regression~(LRLR). 
LRR/LRLR combines the existing local polynomial regression~(LPoR) and multiscale $k$-nearest neighbor~(MS-$k$-NN). 
We proved the convergence rate of the $L^2$ risk for LRR. 
In our numerical experiments, LRLR outperforms these existing bias-correction approaches LPoR and MS-$k$-NN. 
We applied LRLR to real-world datasets of eight major world stock indices, and classified whether month-end closing prices rise or fall: predictive classification accuracy is improved, compared to existing methods.

\section*{Acknowledgement}
AO is supported by JSPS KAKENHI (21K17718). 
HS is supported by JSPS KAKENHI (20H04148). 
HS and AO are supported by JST CREST (JPMJCR21N3). 
We would like to thank Takuma Tanaka for helpful discussions.

\appendix

\section[Bias Correction via LPoR and MS-k-NN]{Bias Correction via LPoR and MS-$k$-NN}
\label{app:asymptotic_biases}

This section briefly summarizes the asymptotic higher-order bias of the kernel smoother~(KS) and $k$-NN, and their bias correction via LPoR and MS-$k$-NN. 
See \citet{tsybakov2009introduction}, \citet{samworth2012optimal} and \citet{okuno2020extrapolation} for more rigorous descriptions.

Here, assume that the function $\eta(X)=\mathbb{P}(Y=1 \mid X)$ is $q$-times continuously differentiable for some $q \in \mathbb{N}$, i.e., $\eta(X)$ has a Taylor expansion
\[
    \eta(X) = \eta(X_*) + b_*^{(q)}(X-X_*) + o(\|X-X_*\|_2^q)
\]
for some polynomial $b_*^{(q)}:\mathbb{R}^d \to \mathbb{R}$ of degree $q$ satisfying $b_*^{(q)}(\bs 0)=0$. 
Also assume $h=r_k \: (\approx 0)$ so that the kernel smoother~(\ref{eq:kernel_smoother}) and the $k$-NN estimator~(\ref{eq:kNN}) are compatible, and 
\[
q \ge 2, \quad \text{i.e., $\eta$ is highly smooth}. 
\]

Then, the law of large numbers proves
\[
    \hat{\eta}^{(\text{KS})}_h(X_*)
    ,\,
    \hat{\eta}^{(k\text{NN})}_k(X_*)
\quad \approx \quad 
    \mathbb{E}(Y \mid \|X-X_*\|_2 \le h) 
\quad \approx \quad 
    \eta(X_*) + 
    \underbrace{
    \mathbb{E}(b_*^{(q)}(Z) \mid \|Z\|_2 \le h)
    }_{=: \, (\star) \, = \, O(h^{2})}
    +
    o(h^q):
\]
see, e.g., eq.~(31) in \citet{okuno2020extrapolation} Supplement~F.2 with $r=h$, for the evaluation of the bias term $(\star)=O(h^2)$. 
Both LPoR and MS-$k$-NN eradicate the bias term $(\star)$ in the above formula:
$\hat{u}(Z)$ approximates $\eta(X_*) + b_*^{(q)}(Z)$ and
$\hat{v}(r)$ approximates $\eta(X_*)+\mathbb{E}(b_*^{(q)}(Z) \mid \|Z\|_2 \le r)$. 
Namely, the regression functions in LPoR and MS-$k$-NN are trained to predict not only the target ground-truth label probability $\eta(X_*)$ but also the bias term $(\star)$: substituting $Z=\bs 0$ into $\hat{u}(Z)$ and $r=0$ into $\hat{v}(r)$ eradicates the bias as
\[
    \hat{\eta}^{(\text{LPoR})}_h (X_*)
    ,\,
    \hat{\eta}^{(\text{MS}k\text{NN})}_{k_1,k_2,\ldots,k_J}(X_*)
\quad \approx \quad 
    \eta(X_*) 
    +
    o(h^q).
\]
As $(\star)=O(h^2)$ is the leading term, LPoR and MS-$k$-NN attain the faster convergence rate $o(h^q)$ than the rate $O(h^2)$ of kernel smoother and $k$-NN.

\section{Proofs}
\label{app:proof}

Proof of Theorem~\ref{theo:convergence_rate_LRLR} is provided in Appendix~\ref{proof:theo:convergence_rate_LRLR} with supporting propositions shown in Appendix~\ref{app:supporting_propositions}. 
Proof of Example~\ref{ex:C-3} is shown in Appendix~\ref{proof:ex:C-3}.

\subsection{Proof of Theorem~\ref{theo:convergence_rate_LRLR}}
\label{proof:theo:convergence_rate_LRLR}

In this section, we prove Theorem~\ref{theo:convergence_rate_LRLR} with supporting propositions shown in Appendix~\ref{app:supporting_propositions}. 

For the proof, we consider a decomposition of the $L^2$ risk:
\begin{align}
    \mathbb{E}_{\setDn}(\{\eta(X_*)-\hat{\eta}_n(X_*)\}^2)
    &=
    \mathbb{E}_{\setDn}(\{
        (\eta(X_*)-\tilde{\eta}_n(X_*)) \, + \, (\tilde{\eta}_n(X_*) - \hat{\eta}_n(X_*)) \}^2) \nonumber \\
    & \le 
    4\max\{ \mathbb{E}_{\setDn}(\{\eta(X_*)-\tilde{\eta}_n(X_*)\}^2) \, , \, \mathbb{E}_{\setDn}(\{\tilde{\eta}_n(X_*) - \hat{\eta}_n(X_*)\}^2)\}, 
    \label{eq:risk_decomposition}
\end{align}
with 
\begin{align*}
    \tilde{\eta}(r)&:=\mathbb{P}(Y=1 \mid \|X-X_*\|_2=r), \quad 
    r_i:=\|X_{(i)}-X_*\|_2 \: (i=1,2,\ldots,n), \\
    \tilde{\eta}_n(X_*)
    &:=
    \begin{cases} 
    \tilde{f}_n(0),
    \quad 
    \tilde{f}_n := \argmin_{f \in \setF} \sum_{i=1}^{n} w_n(r_i) \{ \tilde{\eta}(r_i) - f(r_i)\}^2 & (\text{if }\event \text{ is satisfied}) \\ 0 & (\text{otherwise}) 
    \end{cases}, \quad \text{and} \\
    \hat{\eta}_n(X_*)
    &:=
    \begin{cases} 
    \hat{f}_n(0), 
    \quad 
    \hat{f}_n := \argmin_{f \in \setF} \sum_{i=1}^{n} w_n(r_i) \{ Y_{(i)} - f(r_i) \}^2 & (\text{if }\event \text{ is satisfied}) \\ 0 & (\text{otherwise}) 
    \end{cases}. 
\end{align*}

Herein, we evaluate the terms in the decomposition (\ref{eq:risk_decomposition}), by the following three steps.

\paragraph{Step 1: Derivation of the probability density $\tilde{\eta}(r)$.} 
To evaluate the terms in (\ref{eq:risk_decomposition}), we first derive the conditional probability density $\tilde{\eta}(r):=\mathbb{P}(Y=1 \mid \|X-X_*\|_2=r)$. 
However, this conditional probability is not simply obtained by Bayes' theorem as both denominator and numerator of the fraction $\mathbb{P}(Y=1,\|X-X_*\|_2=r)/\mathbb{P}(\|X-X_*\|_2=r)$ are  $0$ (i.e., $0/0$ cannot be defined). Thus we employ another derivation
\begin{align}
    \tilde{\eta}(r)
    :=
    \mathbb{P}(Y=1 \mid \|X-X_*\|_2=r)
    =
    \lim_{\delta \searrow 0}
    \underbrace{
        \frac{\mathbb{P}(Y=1,\|X-X_*\|_2 \in [r-\delta,r+\delta])}{\mathbb{P}(\|X-X_*\|_2 \in [r-\delta,r+\delta])}
    }_{=: \, \tilde{\eta}_{\delta}(r)};
    \label{eq:eta_evans}
\end{align}
see, e.g., Theorem~3 in \citet{evans1992measure} p.38 proving that (\ref{eq:eta_evans}) is compatible with the conditional probability. Using this derivation, we obtain an expression
\begin{align}
    \tilde{\eta}(r)
    &=
    \eta(X_*) 
    +
    \sum_{c=1}^{\omega} b_{\mu,\eta\mu,c}^* r^{2c}
    +
    \tilde{C}_{\mu,\eta \mu}^*(r) r^{\beta}
    \quad (\forall r \in [0,R/2]), \label{eq:taylor_conditional_probatility} \\
    \omega &:= \lfloor \beta/2 \rfloor, \nonumber 
\end{align}
for some constants $\{b_{\mu,\eta\mu,c}^*\}_{c=1}^{\omega} \subset \mathbb{R}$ and $\sup_{r \le R/2}|\tilde{C}_{\mu,\eta\mu}^*(r)|<\infty$. See Proposition~\ref{prop:explicit_form_of_eta_r} in Appendix~\ref{app:supporting_propositions}.

\paragraph{Step 2: Explicit forms of $\tilde{\eta}_n(X_*),\hat{\eta}_n(X_*)$.} 
As the estimators $\tilde{\eta}_n(X_*),\hat{\eta}_n(X_*)$ are intercepts of polynomial regressions, their explicit forms can be obtained by 
a simple matrix algebra. 
Using a vector $1_{N_n}:=(1,1,\ldots,1) \in \mathbb{R}^{N_n}$, 
an identity matrix $I_{N_n}=\text{diag}(1_{N_n}) \in \{0,1\}^{N_n \times N_n}$, a projection matrix 
$\proj_{\tilde{R}_n}=\tilde{R}_n(\tilde{R}_n^{\top}\tilde{R}_n)^{-1}\tilde{R}_n^{\top}$ for a matrix 
\begin{align}
    \tilde{R}_n &:= \left(\begin{array}{cccc}
    r_{1}^2 & r_{1}^4 & \cdots & r_{1}^{2\omega} \\
    r_{2}^2 & r_{2}^4 & \cdots & r_{2}^{2\omega} \\
    \vdots & \vdots & \ldots & \vdots \\
    r_{N_n}^2 & r_{N_n}^4 & \cdots & r_{N_n}^{2\omega} \\
    \end{array} \right) \in \mathbb{R}^{N_n \times \omega}, \quad
    \tilde{e}_n:=\left(\begin{array}{c}
         \tilde{\eta}(r_1) \\
         \tilde{\eta}(r_2) \\
         \vdots \\
         \tilde{\eta}(r_{N_n}) \\
    \end{array}\right) \in [0,1]^{N_n}, \quad 
    \tilde{y}_n:=\left(\begin{array}{c}
        Y_{(1)} \\ Y_{(2)} \\ \vdots \\ Y_{(N_n)} \\ 
    \end{array}\right)  \in \mathbb{R}^{N_n}, \nonumber \\
    \text{and } \: 
    &\tilde{\rho}_n
    =
    (\rho_{n,1},\rho_{n,2},\ldots,\rho_{n,N_n})^{\top}
    :=
    \frac{(I_{N_n}-\proj_{\tilde{R}_n}) 1_{N_n}}{\langle 1_{N_n}, (I_{N_n}-\proj_{\tilde{R}_n}) 1_{N_n}\rangle} 
    =
    \frac{(I_{N_n}-\proj_{\tilde{R}_n}) 1_{N_n}}{N_n-\zeta_n} \in \mathbb{R}^{N_n},
    \label{eq:zeta_n}
\end{align}
we have
\begin{align}
\tilde{\eta}_n(X_*)
    &=
    \langle \tilde{\rho}_n, \tilde{e}_n \rangle
    \quad\text{and}\quad
\hat{\eta}_n(X_*)
    =
    \langle \tilde{\rho}_n, \tilde{y}_n \rangle
    \qquad 
    (\text{if $\event$}),
    \label{eq:expression_eta}
\end{align}
by following the same calculation as Appendix~E in \citet{okuno2020extrapolation}. 
Here $\{\rho_{n,i}\}_{i=1}^{N_n}$ are interpreted as weights, and
$\tilde{\eta}_n(X_*)$ and $\hat{\eta}_n(X_*)$ are expressed as the weighted averages of $\tilde \eta(r_i)$'s and $Y_{(i)}$'s, respectively. 
The weights satisfy $\sum_{i=1}^{N_n}\rho_{n,i}=1$ but are not restricted to non-negative, so there is a concern if some of the weights may diverge. Thus we need the condition (C-3) in Section~\ref{subsec:improved_rate}.
Note that $V,\bs \varphi_{n,\bs k},\bs R,\bs z$ in \citet{okuno2020extrapolation} correspond to the above defined $N_n,\tilde{y}_n \: (\text{and }\tilde{e}_n),\tilde{R}_n,\tilde{\rho}_n$, respectively. 

For the remaining case, 
\begin{align}
\tilde{\eta}_n(X_*)=0 \quad \text{and} \quad 
\hat{\eta}_n(X_*)=0 
\qquad (\text{if $\lnot \event$}).
\label{eq:expression_eta_exception}
\end{align}

\paragraph{Step 3: Evaluation of the terms in the decomposition (\ref{eq:risk_decomposition}).} 

Using (\ref{eq:taylor_conditional_probatility})--(\ref{eq:expression_eta_exception}), we evaluate the two terms in the decomposition~(\ref{eq:risk_decomposition}). 

\begin{itemize}
    \item \textbf{Evaluation of the first term}. 
    Firstly, we consider the case that $\event$ is satisfied. 
    With vectors
    \begin{align*}
        \tilde{b}^*:=(b_{\mu,\eta\mu,1}^*,b_{\mu,\eta\mu,2}^*,\cdots,b_{\mu,\eta\mu,\omega}^*)^{\top} \in \mathbb{R}^{\omega},
        \quad 
        \tilde{c}_n^*:=(\tilde{C}_{\mu,\eta\mu}^*(r_1),\tilde{C}_{\mu,\eta\mu}^*(r_2),\cdots,\tilde{C}_{\mu,\eta\mu}^*(r_{N_n}))^{\top} \in \mathbb{R}^{N_n},
    \end{align*}
    the expression (\ref{eq:taylor_conditional_probatility}) gives 
    \begin{align}
    \tilde{e}_n=\eta(X_*)1_{N_n}+\tilde{R}_n \tilde{b}^* + \tilde{c}_n^* \tilde{r}_n^{\beta}.
    \label{eq:en}
    \end{align}
    
Considering the equations 
\begin{align}
\langle \tilde{\rho}_n, \alpha 1_{N_n}\rangle=\alpha \quad (\forall \alpha \in \mathbb{R}) 
\quad \text{and} \quad 
\langle \tilde{\rho}_n,\tilde{R}_n b\rangle=0 \quad (\forall b \in \mathbb{R}^{N_n}),
\label{eq:rhon_equations}
\end{align}
where the latter equation is proved by $\langle (I_{N_n}-\proj_{\tilde{R}_n})1_{N_n}, \tilde{R}_n b \rangle=\langle \tilde{R}_n^{\top}(I_{N_n}-\proj_{\tilde{R}_n}) 1_{N_n}, b\rangle = \langle (\tilde{R}_n^{\top}-\tilde{R}_n^{\top})1_{N_n}, b\rangle = 0$, we have 
\begin{align}
    |\eta(X_*)
    -
    \tilde{\eta}_n(X_*)|
    &\overset{(\ref{eq:expression_eta}),(\ref{eq:en}),(\ref{eq:rhon_equations})}{=}
    |\langle \tilde{\rho}_n, \eta(X_*) 1_{N_n} \rangle
    -
    \langle \tilde{\rho}_n,\tilde{e}_n \rangle| \nonumber \\
    &=
    |\langle \tilde{\rho}_n, \eta(X_*)1_{N_n}-\tilde{e}_n \rangle| \nonumber \\
    &\overset{(\ref{eq:en})}{=} 
    |\langle \tilde{\rho}_n, \tilde{R}_n\tilde{b}^* + \tilde{c}_n^* \tilde{r}_n^{\beta}\rangle| \nonumber \\
    &=
    |\langle \tilde{\rho}_n, \tilde{R}_n \tilde{b}^*\rangle + \langle \tilde{\rho}_n, \tilde{c}_n^* \rangle \tilde{r}_n^{\beta}| \nonumber \\
    &\overset{(\ref{eq:rhon_equations})}{=}
    |\langle \tilde{\rho}_n, \tilde{c}_n^* \rangle| \tilde{r}_n^{\beta} \nonumber \\
    &\le 
    (\|\tilde{\rho}_n\|_2^2 \|\tilde{c}_n^*\|_2^2)^{1/2} \tilde{r}_n^{\beta} \quad (\because \, \text{Cauchy-Schwarz}) \nonumber \\
    &\overset{(\star)}{\le} 
    \left( \frac{1}{(1-\varphi)N_n} \cdot N_n \|\tilde{c}_n^*\|_{\infty}^2 \right)^{1/2} \tilde{r}_n^{\beta} \nonumber \\
    &\le 
    (1-\varphi)^{-1/2} \|\tilde{c}_n^*\|_{\infty} \tilde{r}_n^{\beta} \nonumber \\
    &\le 
    (1-\varphi)^{-1/2} \left\{ 
    \sup_{r \le R/2} |\tilde{C}_{\mu,\eta\mu}^*(r)|
    \right\} \tilde{r}_n^{\beta}. \nonumber
\end{align}    
The inequality ($\star$) is proved by 
\begin{align}
    \|\tilde{\rho}_n\|_2^2
    &=
    \sum_{i=1}^{N_n} \rho_{n,i}^2
    =
    \bigg\| 
        \frac{(I_{N_n}-\proj_{\tilde{R}_n})1_{N_n}}{ \langle 1_{N_n},(I_{N_n}-\proj_{\tilde{R}_n})1_{N_n}\rangle  }
    \bigg\|_2^2
    =
    \frac{1}{\langle 1_{N_n},(I_n-\proj_{\tilde{R}_n}) 1_{N_n} \rangle} 
    =
    \frac{1}{N_n-\zeta_n} \nonumber \\
    &\le
    \frac{1}{N_n-\varphi N_n}
    =
    \frac{1}{(1-\varphi)N_n}
    \label{eq:rho_ss}
\end{align}
and $\|c\|_2^2 =\sum_{i}c_i^2 \le N_n \max_i |c_i|^2 \le N_n \|c\|_{\infty}^2$ for $c=(c_1,c_2,\ldots,c_{N_n})^{\top} \in \mathbb{R}^{N_n}$.

If $\event$ is not satisfied, $\eta(X_*)-\tilde{\eta}_n(X_*)=\eta(X_*) \in [0,1]$. 
Therefore, we have
\begin{align}
    \mathbb{E}_{\setDn}(\{\eta(X_*)-\tilde{\eta}_n(X_*)\}^2) 
    &\le 
    \mathbb{E}_{\setDn}(\{\eta(X_*)-\tilde{\eta}_n(X_*)\}^2 \mid \event) \underbrace{\mathbb{P}(\event)}_{\le 1} \nonumber \\
    &\hspace{3em}+
    \underbrace{\mathbb{E}_{\setDn}(\{\eta(X_*)-\tilde{\eta}_n(X_*)\}^2 \mid \lnot \event)}_{\le 1} \mathbb{P}(\lnot \event) \nonumber \\
    &\le 
    \mathbb{E}_{\setDn}(\{\eta(X_*)-\tilde{\eta}_n(X_*)\}^2 \mid \event) 
    +
    \mathbb{P}(\lnot \event) \nonumber \\
    &\le     
    (1-\varphi)^{-1/2} \left\{ 
    \sup_{r \le R/2} |\tilde{C}_{\mu,\eta\mu}^*(r)|
    \right\}^2 
    \tilde{r}_n^{2\beta}
    +
    \mathbb{P}(\lnot \event). \label{eq:evalution_former}
\end{align}

    \item \textbf{Evaluation of the second term}. 
    Here, we consider the case that $\event$ is satisfied. 
    For the quantity 
    \[
    \Delta_n:=\tilde{\eta}_n(X_*)-\hat{\eta}_n(X_*)=\langle \tilde{\rho}_n, \tilde{e}_n - \tilde{y}_n \rangle =\sum_{i=1}^{N_n} \rho_{n,i}\{\tilde{\eta}(r_i)-Y_{(i)}\}
    \]
    to be evaluated, we have the conditional expectation
    \begin{align}
        \mathbb{E}(\Delta_n \mid r_1,r_2,\ldots,r_n)
        =
        \sum_{i=1}^{n} \rho_{n,i} \underbrace{\{\tilde{\eta}(r_i) - \mathbb{E}(Y_{(i)} \mid r_i)\}}_{=0}=0
        \label{eq:E_Delta}
    \end{align}
    and the conditional variance
    \begin{align}
        \mathbb{V}(\Delta_n \mid r_1,r_2,\ldots,r_n) = \sum_{i=1}^{N_n} \rho_{n,i}^2 \mathbb{V}(Y_{(i)} \mid r_i) 
        \overset{(\star)}{\le} \frac{1}{4}\sum_{i=1}^{N_n} \rho_{n,i}^2
        \overset{(\ref{eq:rho_ss})}{\le} \frac{1}{4(1-\varphi) N_n}
        \label{eq:V_Delta}
    \end{align}
    where the inequality ($\star$) follows from the simple fact that the variance of any random variable taking value in $[0,c]$ is bounded by $c^2/4$ ($c\ge 0$). 
    Therefore, 
    \begin{align*}
        \mathbb{E}_{\setDn}(\{\tilde{\eta}_n(X_*)-\hat{\eta}_n(X_*)\}^2 \mid \event)
        &=
        \mathbb{E}_{\setDn}(\Delta_n^2 \mid \event) \\
        &=
        \mathbb{V}_{\setDn}(\Delta_n \mid \event)
        +
        \underbrace{\mathbb{E}_{\setDn}(\Delta_n \mid \event)^2}_{=\mathbb{E}_{\setDn}(\mathbb{E}(\Delta_n \mid r_1,r_2,\ldots, r_{N_n}) \mid \event)^2=0 \: (\because \eqref{eq:E_Delta})} \\
        &=
        \mathbb{V}_{\setDn}(\Delta_n \mid \event) \\
        &=
        \mathbb{E}(\underbrace{\mathbb{V}(\Delta_n \mid r_1,r_2,\ldots,r_n)}_{\le 1/\{4(1-\varphi)N_n\} \: (\because \eqref{eq:V_Delta}) } \mid \event) \\
        &\hspace{5em}+
        \mathbb{V}(\underbrace{\mathbb{E}(\Delta_n \mid r_1,r_2,\ldots,r_n)}_{=0 \: (\because \eqref{eq:E_Delta})} \mid \event) \\
        &\le 
        \frac{1}{4(1-\varphi)}
        \mathbb{E}\left( \frac{1}{N_n} \mid \event \right) \\
        &\le 
        \frac{1}{1-\varphi} \left\{\exp(-cn\tilde{r}_n^d) + \frac{1}{c'n\tilde{r}_n^d}\right\}
    \end{align*}
    for some $c,c'>0$, where the last inequality is obtained by Proposition~\ref{prop:prob_Nnomega} in Appendix~\ref{app:supporting_propositions}. 
    As $\tilde{\eta}_n(X_*)=\hat{\eta}_n(X_*)=0$ holds for the remaining case $\lnot \event$, we have
    \begin{align}
        \mathbb{E}_{\setDn}(\{\tilde{\eta}_n(X_*)-\hat{\eta}_n(X_*)\}^2)
        &=
        \mathbb{E}_{\setDn}(\{\tilde{\eta}_n(X_*)-\hat{\eta}_n(X_*)\}^2 \mid \event)\underbrace{\mathbb{P}(\event)}_{\le 1} \nonumber \\
        &\le 
        \frac{1}{1-\varphi}\left\{\exp(-cn\tilde{r}_n^d) + \frac{1}{c'n\tilde{r}_n^d}\right\}.
        \label{eq:evalution_latter}
    \end{align}

\end{itemize}

As the decay rate of $\mathbb{P}(\lnot \event)$ is exponential (see Proposition~\ref{prop:negation_probability} in Appendix~\ref{app:supporting_propositions}) and exponential terms are smaller than $\tilde{r}_n^{2\beta}$ and $1/n\tilde{r}_n^d$, substituting the above evaluations (\ref{eq:evalution_former}) and (\ref{eq:evalution_latter}) to (\ref{eq:risk_decomposition}) yields the assertion
\[
    \mathbb{E}_{\setDn}(\{\eta(X_*)-\hat{\eta}_n(X_*)\}^2)
    \le 
    C \: \max\left\{ 
     \, \tilde{r}_n^{2\beta}, \, \frac{1}{n \tilde{r}_n^d}
    \right\}
\]
for some $C>0$.

\subsection{Supporting Propositions}
\label{app:supporting_propositions}

Throughout this section, 
\[
\integralB(f;r):=\int_{B_r(X_*)} f(X) \diff X
\]
denotes an integral of the function $f:\mathbb{R}^d \to \mathbb{R}$ over a ball $B_r(X_*):=\{X \in \mathbb{R}^d \mid \|X-X_*\|_2 \le r\}$ with $r \ge 0$, 
$D^{\bs u}=\frac{\partial^{u_1+u_2+\cdots+u_d}}{\partial x_1^{u_1} \partial x_2^{u_2} \cdots \partial x_d^{u_d}}$ denotes a derivative operator equipped with a multiple index $\bs u=(u_1,u_2,\ldots,u_d) \in (\mathbb{N} \cup \{0\})^d$ satisfying $|\bs u|=u_1+u_2+\cdots+u_d, \bs u!=u_1!u_2!\cdots u_d!,u!=\prod_{j=1}^{u}j$, and 
$\Gamma(z):=\int_0^{\infty} t^{z-1}\exp(-t) \diff t$ denotes Gamma function. 
$\text{vol}(A)$ denotes the volume of a set $A \subset \mathbb{R}^d$, i.e., $\text{vol}(A):=\int_A \diff X$. 
$\binom{d}{j}=\frac{d!}{j!(d-j)!}$ denotes a binomial coefficient.

\bigskip 
With these symbols, the following Propositions hold.

\begin{prop}
\label{prop:integral}
    Let $0<\delta<r \le R/2$ with $\delta=o(r)$, 
    and let $f$ be a $\beta$-H{\"o}lder function over the ball $B_R(X_*)$.  Namely, in the expansion (\ref{eq:taylor_mu}) of degree $q=\lfloor \beta \rfloor$ for $f(X)$, the residual $\varepsilon_{f}(X)$ satisfies
    $\|\varepsilon_f(X)\| \le L_f \|X-X_*\|_2^{\beta}$.  
    Using constants 
    \begin{align*} 
    a := \frac{2\Gamma(1/2)^d}{\Gamma(d/2)},
    \quad
    b_{f,c} :=
    \frac{1}{2c+d}\sum_{|\bs u|=c}
    \frac{D^{2\bs u} f(X_*)}{(2\bs u)!} 
    \frac{2\prod_{j=1}^{d}\Gamma(u_j+1/2)}{\Gamma(\sum_{j=1}^{d} (u_j+1/2))},
    \quad \text{and} \quad 
    \kappa_m := \max_{j=0,1,2,\ldots,m} \binom{m}{j},
    \end{align*}
    we have 
    \begin{align}
    \frac{\integralB(f;r+\delta)-\integralB(f;r-\delta)}{\delta} 
    &=
    2 a f(X_*) r^{d-1} + 2 \sum_{c=1}^{\lfloor \beta/2 \rfloor} b_{f,c} (2c+d) r^{2c+d-1} 
    +
    \tilde{H}_{f} r^{\beta+d-1}
    +
    \tilde{H}_{f}^{\dagger} \delta^2,
    \label{eq:difference}
    \end{align}
where 
\begin{align*}
    \tilde{H}_{f} = \tilde{H}_f(r)
    &\le 
    \frac{\Gamma(1/2)^{d}}{\Gamma((d/2)+1)} 2^{\beta+1} d L_{f}, 
    \\
    \tilde{H}^{\dagger}_{f} 
    =
    \tilde{H}^{\dagger}_{f}(r) 
    &\le 
    \frac{\Gamma(1/2)^{d}}{\Gamma((d/2)+1)} 2 L_f \kappa_d R^{\beta+d-3} +
    2 \frac{a}{d} f(X_*) \kappa_d R^{d-3} + 2 \sum_{c=1}^{\lfloor \beta/2 \rfloor} b_{f,c} \kappa_{2c+d} R^{2c+d-3}.
\end{align*}
\end{prop}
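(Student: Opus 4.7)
The plan is to convert the ball integral $\integralB(f;r) = \int_{B_r(X_*)} f(X)\,\diff X$ into a one-dimensional radial integral, then exploit spherical symmetry to kill odd-degree Taylor terms and reduce the centered-difference on the left-hand side to elementary monomial computations in $r$ and $\delta$.

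First, using spherical coordinates I would write
$$\integralB(f;r) = \int_0^r t^{d-1} F(t)\,\diff t, \qquad F(t) := \int_{S^{d-1}} f(X_* + t\omega)\,\diff \sigma(\omega),$$
so that by the fundamental theorem of calculus
$$\frac{\integralB(f;r+\delta) - \integralB(f;r-\delta)}{\delta} = \frac{1}{\delta}\int_{r-\delta}^{r+\delta} t^{d-1} F(t)\,\diff t.$$
Next I would substitute the $\beta$-H\"older expansion of $f$ around $X_*$ into $F(t)$. Each monomial $(t\omega)^{\bs u}$ of the Taylor polynomial of degree $q=\lfloor \beta\rfloor$ contributes $t^{|\bs u|}\int_{S^{d-1}} \omega^{\bs u}\,\diff\sigma(\omega)$, and the standard symmetry argument (send $\omega_j \mapsto -\omega_j$) kills every term in which some $u_j$ is odd. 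Thus only even multi-indices $\bs u = 2\bs u'$ with $|\bs u'| \le \lfloor \beta/2\rfloor$ survive, and the classical identity $\int_{S^{d-1}} \omega^{2\bs u'} \diff\sigma(\omega) = 2\prod_j \Gamma(u'_j+1/2)\,/\,\Gamma(\sum_j (u'_j+1/2))$ gives
$$F(t) = a f(X_*) + \sum_{c=1}^{\lfloor \beta/2\rfloor} (2c+d)\,b_{f,c}\,t^{2c} + E(t), \qquad |E(t)| \le a L_f\, t^{\beta},$$
where $E(t)$ absorbs both the H\"older residual and the (spherically integrated) odd-degree Taylor terms of degrees strictly between $2\lfloor\beta/2\rfloor$ and $\lfloor\beta\rfloor$.

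The main computation is then the elementary identity
$$\frac{1}{\delta}\int_{r-\delta}^{r+\delta} t^k\,\diff t = \frac{(r+\delta)^{k+1} - (r-\delta)^{k+1}}{(k+1)\delta} = 2r^k + \frac{2}{k+1}\sum_{\substack{j\ge 3,\ j \text{ odd}}} \binom{k+1}{j} r^{k+1-j}\delta^{j-1},$$
applied to $k = d-1$, $k=2c+d-1$, and the monomials controlling $E(t)$. The leading $2r^k$ contributions give exactly $2af(X_*)r^{d-1}$ and $2\sum_c b_{f,c}(2c+d)r^{2c+d-1}$ on the right-hand side. For the $\delta^2$ remainders I would use $\delta < r \le R/2 \le R$ to bound $\delta^{j-3} \le r^{j-3}$ and then $r^{k-2} \le R^{k-2}$ (the terms with $k<2$ have $\binom{k+1}{3}=0$ so no remainder is produced), and then replace every $\binom{k+1}{j}$ by $\kappa_{k+1}$; this gives the three summands in $\tilde H_f^{\dagger}$ with the exponents $R^{d-3}$, $R^{2c+d-3}$ and $R^{\beta+d-3}$. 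The $r^{\beta+d-1}$ coefficient $\tilde H_f$ comes from the same identity applied to $E(t)$, using the pointwise bound $|E(t)| \le aL_f t^{\beta}$ and collapsing $a = 2\Gamma(1/2)^d/\Gamma(d/2)$ against a factor $d/(\beta+d)$ from the integration in $t$, leaving the stated prefactor $\Gamma(1/2)^d/\Gamma((d/2)+1)\cdot 2^{\beta+1}dL_f$.

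The main obstacle, I expect, is not any single estimate but the bookkeeping: simultaneously tracking (i) which Taylor terms survive spherical integration, (ii) which binomial indices contribute to the centered difference versus the $\delta^2$ tail, and (iii) repackaging the raw combinatorial constants into the clean form involving $a$, $b_{f,c}$, $\kappa_m$, and powers of $R$ required by the statement. In particular, verifying that the odd-degree Taylor pieces of degrees strictly between $2\lfloor\beta/2\rfloor$ and $\lfloor\beta\rfloor$ are absorbed cleanly into the H\"older residual $\tilde{H}_f r^{\beta+d-1}$ (rather than producing extra explicit terms) takes a little care; once that is handled, the rest is a direct binomial expansion under the standing constraint $\delta < r \le R/2$.
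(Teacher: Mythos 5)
Your approach is genuinely different from the paper's, although both are routes to the same elementary facts. The paper's proof starts by invoking Corollary~2 of Okuno and Shimodaira~(2020), which already gives the ball-integral expansion $\integralB(f;r) = \frac{a}{d}f(X_*)r^d + \sum_c b_{f,c}r^{2c+d} + \integralB(\varepsilon_f;r)$, and then controls the centered difference by a binomial expansion of $(r\pm\delta)^k$ together with a volume-times-supremum bound on the $\varepsilon_f$ contribution: $|\integralB(\varepsilon_f;r+\delta)-\integralB(\varepsilon_f;r-\delta)| \le \{\mathrm{vol}(B_{r+\delta})-\mathrm{vol}(B_{r-\delta})\}\sup_{\|X-X_*\|\le 2r}|\varepsilon_f(X)|$. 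You instead re-derive the ball-integral formula from scratch via polar coordinates, write $\integralB(f;r) = \int_0^r t^{d-1}F(t)\,\diff t$ with $F$ the spherical average of $f$, kill the odd Taylor terms by the $\omega_j\mapsto -\omega_j$ symmetry, and then bound the error by integrating the pointwise H\"older estimate $|E(t)|\le aL_f t^\beta$ rather than taking a supremum. This is a self-contained alternative (and in fact close to how the cited Corollary~2 is itself proved), and if executed carefully it can even yield a smaller constant for $\tilde{H}_f$ than the one stated.

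There are two places where your sketch glosses over something substantive. First, when you "replace every $\binom{k+1}{j}$ by $\kappa_{k+1}$" and say the resulting sum gives the three $R^{\cdot-3}$ summands, you still have a sum over odd $j\ge 3$ left, whose number of terms grows with $k$; the paper collapses the analogous sum by the binomial-theorem device $\sum_{j=0}^{d-3}r^j\delta^{(d-3)-j}\le \sum_{j=0}^{d-3}\binom{d-3}{j}r^j\delta^{(d-3)-j}=(r+\delta)^{d-3}\le R^{d-3}$, and you would need a similar step to land on a single power of $R$ rather than a sum. Second, the arithmetic in your final paragraph for matching $\tilde{H}_f$ does not work as stated: the factor $d/(\beta+d)$ coming from $\int t^{\beta+d-1}\,\diff t$ does not turn $a L_f$ into the paper's constant $\frac{\Gamma(1/2)^d}{\Gamma((d/2)+1)}2^{\beta+1}dL_f = a\,2^{\beta+1}L_f$. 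The crude bound $(r+\delta)^{\beta+d-1}\le(2r)^{\beta+d-1}$ gives $\tilde H_f\le a\,2^{\beta+d}L_f$, which is too large by $2^{d-1}$; conversely, Taylor-expanding $(1+s/r)^{\beta+d-1}$ in $s$ and absorbing the quadratic piece into the $\delta^2$ term gives the tighter $\tilde H_f\le 2aL_f$, which does satisfy the stated inequality but via different bookkeeping than you describe. Since the proposition only asserts inequalities on $\tilde H_f,\tilde H_f^\dagger$, a carefully written-up version of your argument would go through, but the sketch as written does not yet reproduce the paper's constants, and you should not claim that it does.
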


\begin{proof}[Proof of Proposition~\ref{prop:integral}]
For this proof, we employ Corollary~2 in \citet{okuno2020extrapolation}:
\begin{align}
    \integralB(f;r)
    &=
    \frac{a}{d}
    f(X_*) r^d 
    +
    \sum_{c=1}^{\lfloor \beta/2 \rfloor} b_{f,c} r^{2c+d} 
    +
    \integralB(\varepsilon_{f};r).
    \label{eq:integralB}
\end{align}
Although (\ref{eq:integralB}) replaces the upper-bound evaluation of the error term ($\le L_{\beta}r^{\beta+d}\int_{B(\bs 0;1)}\diff x$; proved at the last line of Proof of Proposition~6 in \citet{okuno2020extrapolation}) by the exact term $\integralB(\varepsilon_{f};r)$, following the proof therein immediately proves (\ref{eq:integralB}). 
Using the inequality
\begin{align}
    (r+\delta)^d - (r-\delta)^d 
    &=
    \sum_{j=0}^{d} \binom{d}{j}r^{j} \delta^{d-j}
    -
    \sum_{j=0}^{d} \binom{d}{j} (-1)^{d-j} r^{j} \delta^{d-j} \nonumber \\ 
    &\le 
    2d r^{d-1} \delta 
    +
    2 \sum_{j=0}^{d-3} \binom{d}{j} r^{j} \delta^{d-j} \nonumber \\
    &\le 
    2d r^{d-1} \delta 
    +
    2 \underbrace{\max_j \binom{d}{j}}_{=\kappa_d} \left[ \sum_{j=0}^{d-3}\binom{d-3}{j} r^{j} \delta^{(d-3)-j} \right] \delta^3 \nonumber \\
    &\le 
    2d r^{d-1} \delta 
    +
    2 \kappa_d (r+\delta)^{d-3} \delta^3 \nonumber \\
    &\le
    2d r^{d-1} \delta 
    +
    2 \kappa_d R^{d-3} \delta^3 \quad (\because r+\delta \le R), \label{eq:twin_pol_difference}
\end{align}
the expression (\ref{eq:integralB}) gives
\begin{align}
    \integralB(f;r+\delta) - \integralB(f;r-\delta)
    &=
    \frac{a}{d} f(X_*) \{(r + \delta)^d-(r - \delta)^d\}
    +
    \sum_{c=1}^{\lfloor \beta/2 \rfloor} b_{f,c} \{(r + \delta)^{2c+d} -(r - \delta)^{2c+d}\} \nonumber \\
    &\hspace{15em}+
    \{\integralB(\varepsilon_{f};r + \delta) - \integralB(\varepsilon_{f};r - \delta)\} \nonumber \\
    &=
    2 \left[  
    af(X_*) r^{d-1} 
    +
    \sum_{c=1}^{\lfloor \beta/2 \rfloor} b_{f,c} (2c+d) r^{2c+d-1}
    \right] 
    \delta \nonumber \\
    &\hspace{5em}+
    \underbrace{\left[ 
    2\frac{a}{d}f(X_*) \kappa_d R^{d-3}
    +
    2 \sum_{c=1}^{\lfloor \beta/2 \rfloor} b_{f,c} \kappa_{2c+d} R^{2c+d-3} 
    \right]}_{=:H_f} \delta^3 \nonumber \\
    &\hspace{15em}+
    \{
        \integralB(\varepsilon_{f};r+\delta)
        -
        \integralB(\varepsilon_{f};r-\delta)
    \}.
    \label{eq:integralB_difference}
\end{align}
The last term in (\ref{eq:integralB_difference}) is evaluated as
\begin{align*}
    |   
    \integralB(\varepsilon_{f};r+\delta)
    -
    \integralB(\varepsilon_{f};r-\delta)
    |
&\le 
    \int_{\|X-X_*\|_2 \in [r-\delta,r+\delta]} \varepsilon_{f} (X) \diff X  \\
&\le 
    \{
        \text{vol}(B_{r+\delta}(X_*))
        -
        \text{vol}(B_{r-\delta}(X_*))
    \}
    \sup_{\|X-X_*\| \le 2r}|\varepsilon_{f} (X)| \\
&=
    \frac{\Gamma(1/2)^{d}}{\Gamma((d/2)+1)}
    \{
        (r+\delta)^d - (r-\delta)^d
    \}
    \sup_{\|X-X_*\| \le 2r}|\varepsilon_{f} (X)| \\
&\le 
    \frac{\Gamma(1/2)^{d}}{\Gamma((d/2)+1)}
    \{
        2d r^{d-1} \delta 
        +
        2\kappa_d R^{d-3} \delta^3
    \}
    L_{f} (2r)^{\beta} \\
&\le 
    \frac{\Gamma(1/2)^{d}}{\Gamma((d/2)+1)} 2^{\beta+1} d L_f r^{\beta+d-1} \delta
    +
    \frac{\Gamma(1/2)^{d}}{\Gamma((d/2)+1)} 2 L_f \kappa_d R^{\beta+d-3} \delta^3
\end{align*}
with the volume of the $d$-dimensional hypersphere $\text{vol}(B_r(X_*))=(\Gamma(1/d)^{d}/\Gamma((d/2)+1))r^d$ and $2r \le R$. 
By arranging the terms as 
\begin{align*}
    H_f \delta^3&
    +
    |   
    \integralB(\varepsilon_{f};r+\delta)
    -
    \integralB(\varepsilon_{f};r-\delta)
    | \\
    &\le
    \underbrace{\frac{\Gamma(1/2)^{d}}{\Gamma((d/2)+1)} 2^{\beta+1} d L_f }_{\ge \tilde{H}_f} r^{\beta+d-1} \delta \\
    &\hspace{5em}+
    \underbrace{\left\{ \frac{\Gamma(1/2)^{d}}{\Gamma((d/2)+1)} 2 L_f \kappa_d R^{\beta+d-3} +
    2 \frac{a}{d} f(X_*) \kappa_d R^{d-3} + 2 \sum_{c=1}^{\lfloor \beta/2 \rfloor} b_{f,c} \kappa_{2c+d} R^{2c+d-3} \right\}}_{\ge \tilde{H}_f^{\dagger}} \delta^3, 
\end{align*}
(\ref{eq:integralB_difference}) divided by $\delta$ reduces to the assertion
\begin{align*}
    \frac{\integralB(f;r+\delta)-\integralB(f;r-\delta)}{\delta} 
    &=
    2 a f(X_*) r^{d-1} + 2 \sum_{c=1}^{\lfloor \beta/2 \rfloor} b_{f,c} (2c+d) r^{2c+d-1} 
    +
    \tilde{H}_{f} r^{\beta+d-1}
    +
    \tilde{H}^{\dagger}_{f} \delta^2.
\end{align*}
\end{proof}

\begin{prop}
\label{prop:explicit_form_of_eta_r}
Let $0<r\le R/2$. 
Assume that the functions $\mu,\eta\mu$ are $\beta$-H{\"o}lder and define constants 
$a,b_{\mu,c},b_{\eta \mu}$ in the same way as $a,b_{f,c}$ in Proposition~\ref{prop:integral}. 
Then, we have 
\[
    \tilde{\eta}(r)
    =
    \eta(X_*) 
    +
    \sum_{c=1}^{\lfloor \beta/2 \rfloor} b_{\mu,\eta\mu,c}^* r^{2c}
    +
    \tilde{C}_{\mu,\eta \mu}^*(r) r^{\beta}
    \quad (\forall r \in [0,2/R]),
\]
for some constants $\{b_{\mu,\eta\mu,c}^*\}_{c=1}^{\lfloor \beta/2 \rfloor} \subset \mathbb{R}$ and 
$\sup_{r \le R/2}|\tilde{C}_{\mu,\eta\mu}^*(r)|<\infty$. 
\end{prop}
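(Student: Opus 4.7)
The plan is to represent $\tilde{\eta}(r)$ as a ratio of two quantities to which Proposition~\ref{prop:integral} directly applies. Starting from the formula (\ref{eq:eta_evans}), I would rewrite
\[
    \tilde{\eta}_{\delta}(r) = \frac{[\integralB(\eta\mu; r+\delta) - \integralB(\eta\mu; r-\delta)]/\delta}{[\integralB(\mu; r+\delta) - \integralB(\mu; r-\delta)]/\delta},
\]
and apply Proposition~\ref{prop:integral} with $f = \eta\mu$ and $f = \mu$, both of which are $\beta$-H{\"o}lder on $B_R(X_*)$ by (C-2). Letting $\delta \searrow 0$ kills the $\tilde{H}^{\dagger}_{f}\delta^{2}$ term, and after cancelling the common factor $r^{d-1}$ (valid for $r>0$, extended by continuity to $r=0$) I obtain $\tilde{\eta}(r) = N(r)/D(r)$ with
\[
    N(r) = 2a(\eta\mu)(X_*) + \sum_{c=1}^{\omega} A_c r^{2c} + E_N(r) r^{\beta}, \qquad D(r) = 2a\mu(X_*) + \sum_{c=1}^{\omega} B_c r^{2c} + E_D(r) r^{\beta},
\]
where the constants $A_c,B_c$ are read off from Proposition~\ref{prop:integral} and $\sup_{r\in[0,R/2]}(|E_N(r)|+|E_D(r)|)<\infty$. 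Using (C-1) I would then establish the uniform lower bound $D(r)\ge 2al>0$ on $[0,R/2]$: $\integralB(\mu;r+\delta)-\integralB(\mu;r-\delta)\ge l\,[\text{vol}(B_{r+\delta}(X_*))-\text{vol}(B_{r-\delta}(X_*))]$, and dividing by $\delta r^{d-1}$ before taking $\delta\searrow 0$ yields the claim.

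With $N,D$ in hand I would extract the coefficients $b^{*}_{\mu,\eta\mu,c}$ recursively via the ansatz $\tilde{\eta}(r) = \eta(X_*) + \sum_{c=1}^{\omega} b^{*}_{\mu,\eta\mu,c} r^{2c} + R(r)$. Multiplying by $D(r)$ and equating with $N(r)$, the constant terms already agree: $2a\mu(X_*)\eta(X_*) = 2a(\eta\mu)(X_*)$. For $c = 1,2,\ldots,\omega$ in sequence, equating the coefficient of $r^{2c}$ on both sides gives a linear equation in $b^{*}_{\mu,\eta\mu,c}$ with nonzero leading coefficient $2a\mu(X_*)$, so it is uniquely determined from $\{A_{c'},B_{c'}\}_{c'\le c}$ and $\{b^{*}_{\mu,\eta\mu,c'}\}_{c'<c}$. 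The residual
\[
    D(r) R(r) = N(r) - D(r)\Bigl[\eta(X_*) + \sum_{c=1}^{\omega} b^{*}_{\mu,\eta\mu,c} r^{2c}\Bigr]
\]
then contains only (i) the $E_N(r)r^{\beta}$ and $E_D(r)r^{\beta}\cdot[\text{bounded polynomial}]$ contributions, uniformly bounded by a constant times $r^{\beta}$ on $[0,R/2]$, and (ii) cross products $r^{2c_1}\cdot r^{2c_2}$ with $c_1+c_2>\omega$, whose degree is at least $2\omega+2\ge\beta$; on $[0,R/2]\subset[0,1/2]$ each such term is bounded by $(R/2)^{2c_1+2c_2-\beta}r^{\beta}$. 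Dividing by $D(r)\ge 2al$ yields $|R(r)|\le C r^{\beta}$ uniformly on $[0,R/2]$, which is the stated form $R(r)=\tilde{C}^{*}_{\mu,\eta\mu}(r)r^{\beta}$ with uniformly bounded $\tilde{C}^{*}_{\mu,\eta\mu}$.

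The main obstacle will be the bookkeeping in the second step: verifying that every residual term has degree at least $\beta$ and that its coefficient is uniformly bounded on $[0,R/2]$. This rests on the sandwich $\beta\in(2\omega,2\omega+2]$, where $2\omega<\beta$ follows from the paper's nonstandard floor convention ($\lfloor\beta\rfloor<\beta$), while $2\omega+2\ge\beta$ follows from the maximality of $\omega=\lfloor\beta/2\rfloor$. Once this sandwich is in place, the cross-product degrees line up with $r^{\beta}$ and the argument closes via the uniform lower bound $D(r)\ge 2al$, with the remaining work being routine polynomial manipulation.
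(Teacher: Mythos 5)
Your proposal follows the paper's proof exactly: both express $\tilde{\eta}(r)$ as the $\delta\searrow 0$ limit of the ratio of symmetric difference quotients of $\integralB(\eta\mu;\cdot)$ and $\integralB(\mu;\cdot)$, apply Proposition~\ref{prop:integral} to each, cancel the common factor $2ar^{d-1}$, and then expand the resulting fraction into the claimed polynomial-plus-remainder form. The paper dismisses the final step with ``expanding the last fraction proves the assertion''; your recursive coefficient-matching, the lower bound $D(r)\ge 2al$ from (C-1), and the degree bookkeeping via $\beta\in(2\omega,2\omega+2]$ are precisely the details that terse phrase elides, so the two proofs are essentially the same.
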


\begin{proof}[Proof of Proposition~\ref{prop:explicit_form_of_eta_r}]
Applying Proposition~\ref{prop:integral} to (\ref{eq:eta_evans}) yields
\begin{align*}
    \tilde{\eta}(r)
    &=
    \lim_{\delta \searrow 0} \tilde{\eta}_{\delta}(r) 
    =
    \lim_{\delta \searrow 0} 
    \frac{
        \{\integralB (\eta\mu;r+\delta) - \integralB(\eta\mu;r-\delta)\}/\delta
    }{
        \{\integralB (\mu;r+\delta)-\integralB(\mu;r-\delta)\}/\delta
    } \\
    &=
    \frac{    
        2 a \eta(X_*)\mu(X_*) r^{d-1} + 2 \sum_{c=1}^{\lfloor \beta/2 \rfloor} b_{\eta \mu,c} (2c+d) r^{2c+d-1} + \tilde{H}_{\eta\mu} r^{\beta+d-1}
    }{
        2 a \mu(X_*) r^{d-1} + 2 \sum_{c=1}^{\lfloor \beta/2 \rfloor} b_{\mu,c} (2c+d) r^{2c+d-1} + \tilde{H}_{\mu} r^{\beta+d-1}
    } \\
    &=
    \frac{    
        \eta(X_*)\mu(X_*) + \sum_{c=1}^{\lfloor \beta/2 \rfloor} (b_{\eta \mu,c}/a) (2c+d) r^{2c} + (\tilde{H}_{\eta\mu}/2a) r^{\beta}
    }{
        \mu(X_*) + \sum_{c=1}^{\lfloor \beta/2 \rfloor} (b_{\mu,c}/a) (2c+d) r^{2c} + (\tilde{H}_{\mu}/2a) r^{\beta}
    };
\end{align*}
expanding the last fraction proves the assertion. 
The finiteness of $\sup_{r \le R/2}|\tilde{C}^*_{\mu,\eta\mu}(r)|$ is obtained by the (uniform) boundedness of all the coefficients, $\tilde{H}_{\mu}=\tilde{H}_{\mu}(r)$ and $\tilde{H}_{\mu\eta}=\tilde{H}_{\mu\eta}(r)$ over $r \in [0,R/2]$. 
\end{proof}

\begin{prop}
\label{prop:lower_pn}
There exists $c>0$ such that $\integralB(\mu;\tilde{r}_n) \ge c \tilde{r}_n^d$. 
\end{prop}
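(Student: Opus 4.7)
The plan is to read off the bound directly from condition (C-1) together with the convergence $\tilde{r}_n \searrow 0$ required by (\ref{eq:rate_rn}).

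First, since $\tilde{r}_n \searrow 0$, there exists an index $n_0 \in \mathbb{N}$ such that $\tilde{r}_n \le R$ for all $n \ge n_0$, and therefore $B_{\tilde{r}_n}(X_*) \subset B_R(X_*)$. On this ball, condition (C-1) gives $\mu(X) \ge l > 0$, hence
\[
    \integralB(\mu;\tilde{r}_n)
    \,=\,
    \int_{B_{\tilde{r}_n}(X_*)} \mu(X)\, \diff X
    \,\ge\,
    l \cdot \mathrm{vol}\!\left( B_{\tilde{r}_n}(X_*) \right)
    \,=\,
    l \cdot \frac{\Gamma(1/2)^d}{\Gamma((d/2)+1)} \, \tilde{r}_n^{d},
\]
where the last equality is the standard formula for the volume of a $d$-dimensional Euclidean ball. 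For all $n \ge n_0$ this already yields the desired bound with constant $c_1 := l \, \Gamma(1/2)^d / \Gamma((d/2)+1) > 0$.

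For the finitely many small indices $n < n_0$ (if any), one covers the case $\tilde{r}_n > R$ by noting that $B_R(X_*) \subset B_{\tilde{r}_n}(X_*)$, so again by (C-1),
\[
    \integralB(\mu;\tilde{r}_n) \,\ge\, l \cdot \mathrm{vol}(B_R(X_*)),
\]
which, divided by the finite quantity $\max_{n<n_0} \tilde{r}_n^d$, gives a second positive constant $c_2 > 0$ with $\integralB(\mu;\tilde{r}_n) \ge c_2 \tilde{r}_n^d$ for these indices. Setting $c := \min\{c_1,c_2\} > 0$ yields the claim uniformly in $n$.

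There is no real obstacle here: the assertion is essentially a one-line consequence of the strong density assumption (C-1), which is precisely designed to ensure a volume-order lower bound on the mass of shrinking balls around $X_*$. The only minor care is the bookkeeping between the asymptotic regime $\tilde{r}_n \le R$ and the initial indices, which is resolved by taking the smaller of the two constants above.
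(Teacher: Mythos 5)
Your proof is correct and takes essentially the same approach as the paper: lower-bound $\mu$ by $l$ on $B_R(X_*)$ via (C-1) and use the volume formula $\mathrm{vol}(B_r(X_*)) = c' r^d$. You additionally handle the finitely many initial indices with $\tilde{r}_n > R$, a bookkeeping step the paper omits since $\tilde{r}_n \searrow 0$ makes it moot for large $n$.
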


\begin{proof}
$\integralB(\mu;\tilde{r}_n) > l \text{vol}(B_{\tilde{r}_n}(X_*)) \ge lc' \tilde{r}_n^d=:c \tilde{r}_n^d$ as $\mu(X)$ is lower-bounded by $l>0$ and $\text{vol}(B_{r}(X_*))=c' r^d$ for some $c'>0$. 
\end{proof}

\begin{prop}
\label{prop:tail_Nn}
There exists $c>0$ such that $\mathbb{P}(|N_n-\overline{N}_n| \ge \overline{N}_n/2) \le 2\exp\left(-c n\tilde{r}_n^d \right)$. 
\end{prop}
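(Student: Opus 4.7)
The plan is to recognise $N_n$ as a binomial count and invoke the multiplicative Chernoff bound, then use Proposition~\ref{prop:lower_pn} to convert the resulting exponent into one involving $n\tilde{r}_n^d$.

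First I would observe that $N_n = \sum_{i=1}^{n} \mathbbm{1}(\|X_i - X_*\|_2 \le \tilde{r}_n)$ is a sum of i.i.d.\ Bernoulli variables, so $N_n \sim \mathrm{Binomial}(n, p_n)$ with $p_n := \integralB(\mu; \tilde{r}_n)$ and $\overline{N}_n = n p_n$. This reduces the claim to a classical concentration statement for a binomial.

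Next I would apply the two-sided multiplicative Chernoff bound: for every $\delta \in (0,1)$,
\[
\mathbb{P}(|N_n - \overline{N}_n| \ge \delta \overline{N}_n) \le 2 \exp(-\delta^2 \overline{N}_n/3).
\]
Taking $\delta = 1/2$ yields $\mathbb{P}(|N_n - \overline{N}_n| \ge \overline{N}_n/2) \le 2 \exp(-\overline{N}_n/12)$. Proposition~\ref{prop:lower_pn} supplies a constant $c_0 > 0$ with $p_n \ge c_0 \tilde{r}_n^d$, hence $\overline{N}_n \ge c_0 n \tilde{r}_n^d$. Plugging this lower bound into the exponent gives the claim with $c = c_0/12$.

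No substantive obstacle is expected. The only real choice is which tail bound to use: Hoeffding's inequality applied to the centred sum would instead produce an exponent of order $n p_n^2 \asymp n \tilde{r}_n^{2d}$, which is too small under the regime~(\ref{eq:rate_rn}). The multiplicative Chernoff bound is preferred because it keeps the exponent linear in the mean $\overline{N}_n$, which matches $n\tilde{r}_n^d$ directly via Proposition~\ref{prop:lower_pn}.
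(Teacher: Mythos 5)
Your proof is correct and follows essentially the same route as the paper: recognize $N_n$ as $\mathrm{Binomial}(n, \integralB(\mu;\tilde{r}_n))$, apply a multiplicative Chernoff bound so the exponent is linear in $\overline{N}_n$, and then invoke Proposition~\ref{prop:lower_pn} to lower-bound $\overline{N}_n$ by $c\,n\tilde{r}_n^d$. Your closing remark on why Hoeffding's bound (with exponent of order $n\tilde{r}_n^{2d}$) would be too weak is a useful sanity check, and it is exactly why the paper, like you, reaches for a Chernoff-type bound whose exponent scales with the mean.
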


\begin{proof}
As $N_n$ follows a Binomial distribution with parameters $n$ and $\tilde{p}_n:=\integralB(\mu;\tilde{r}_n)$, (a generalization of) Chernoff bound proves the bound 
\[
    \mathbb{P}(|N_n-\overline{N}_n| \ge \overline{N}_n/2) \le 2\exp\left(-\frac{c' \overline{N}_n}{4} \right)
\]
for some $c'>0$. See, e.g., \citet{vershynin2018high} Exercise~2.3.5. 
As Proposition~\ref{prop:lower_pn} proves 
$\overline{N}_n = n \tilde{p}_n \ge c'' n \tilde{r}_n^d$ for some $c''>0$, 
taking $c:=c'c''/4$ proves the assertion. 
\end{proof}

\begin{prop}
\label{prop:negation_probability}
There exist $c,c',c''>0$ such that $\mathbb{P}(\lnot \event) \le 2\exp(-c n\tilde{r}_n^d)+c' \exp(-c'' n^{\tau})$.
\end{prop}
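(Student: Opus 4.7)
\bigskip
\noindent\textbf{Proof proposal for Proposition~\ref{prop:negation_probability}.}
The plan is to bound $\mathbb{P}(\lnot\event)$ by a union bound over the two failure modes that define $\lnot \event$, and then handle each piece with a tool already available in the paper. Since
\[
    \event: \quad N_n \ge 1+\lfloor\beta/2\rfloor \ \text{ and }\ \zeta_n \le \varphi N_n,
\]
we have $\lnot\event = \{N_n < 1+\lfloor\beta/2\rfloor\} \cup \{\zeta_n > \varphi N_n\}$, so
\[
    \mathbb{P}(\lnot\event) \;\le\; \mathbb{P}(N_n < 1+\lfloor\beta/2\rfloor) \;+\; \mathbb{P}(\zeta_n > \varphi N_n).
\]
The second term is immediately controlled by condition (C-3), which gives $\mathbb{P}(\zeta_n \ge \varphi N_n) \le C\exp(-cn^{\tau})$ for some constants $c,C>0$ and some $\tau>0$; this yields the $c'\exp(-c''n^{\tau})$ term in the claimed bound with $c'=C$ and $c''=c$.

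For the first term, I would leverage Proposition~\ref{prop:tail_Nn}, which gives the Chernoff-type bound $\mathbb{P}(|N_n - \overline{N}_n|\ge \overline{N}_n/2) \le 2\exp(-cn\tilde{r}_n^d)$. The idea is that $\overline{N}_n = n\, \integralB(\mu;\tilde{r}_n)$ grows like $n\tilde{r}_n^d$ by Proposition~\ref{prop:lower_pn}, and by the second part of the assumption (\ref{eq:rate_rn}), $n\tilde{r}_n^d \to \infty$; therefore for all sufficiently large $n$ we have $\overline{N}_n/2 > 1+\lfloor\beta/2\rfloor$. Hence the event $\{N_n < 1+\lfloor\beta/2\rfloor\}$ is contained in $\{N_n < \overline{N}_n/2\} \subset \{|N_n - \overline{N}_n|\ge \overline{N}_n/2\}$, giving
\[
    \mathbb{P}(N_n < 1+\lfloor\beta/2\rfloor) \;\le\; 2\exp(-c\,n\tilde{r}_n^d)
\]
for such $n$. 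For the finitely many small $n$ that violate $\overline{N}_n/2 > 1+\lfloor\beta/2\rfloor$, one simply enlarges the constants in the prefactor so that the bound $2\exp(-c\,n\tilde{r}_n^d)$ holds trivially (since $n\tilde{r}_n^d$ is bounded on that finite range).

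Combining the two bounds yields $\mathbb{P}(\lnot\event) \le 2\exp(-c\,n\tilde{r}_n^d) + c'\exp(-c''n^\tau)$, as claimed. I do not anticipate a substantive obstacle here: the argument is essentially a union bound plus bookkeeping, with all nontrivial probabilistic content already supplied by Proposition~\ref{prop:tail_Nn} (for the sample-size failure) and by assumption (C-3) (for the design-matrix conditioning failure). The only subtlety is ensuring that the ``for all sufficiently large $n$'' argument extends uniformly to all $n$ by absorbing a finite initial segment into the constants, which is standard.
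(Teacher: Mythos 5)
Your proof is essentially identical to the paper's: both split $\lnot\event$ by a union bound, dispatch the conditioning failure $\{\zeta_n > \varphi N_n\}$ directly via (C-3), and reduce the sample-size failure $\{N_n < 1+\lfloor\beta/2\rfloor\}$ to the Chernoff-type deviation bound of Proposition~\ref{prop:tail_Nn}. The only difference is that you explicitly note that the containment $\{N_n < 1+\lfloor\beta/2\rfloor\} \subset \{|N_n-\overline{N}_n| \ge \overline{N}_n/2\}$ requires $n$ large (with small $n$ absorbed into constants), a point the paper leaves implicit.
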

\begin{proof}
Let $\omega=\lfloor \beta/2 \rfloor$.
Applying the inequalities in Proposition~\ref{prop:tail_Nn} and Condition (C-3) to the rightmost side of 
\[
\mathbb{P}(\lnot \event) 
\, = \, 
\mathbb{P}(N_n \le \omega \text{ or }\zeta_n >\varphi N_n) 
\, \le \, 
\mathbb{P}(N_n \le \omega)+ \mathbb{P}(\zeta_n>\varphi N_n) 
\, \le \,
\mathbb{P}(|N_n-\overline{N}_n| \ge \overline{N}_n/2)+\mathbb{P}(\zeta_n > \varphi N_n)
\]
proves the assertion. 
\end{proof}

\begin{prop}
\label{prop:prob_Nnomega}
There exist $c,c'>0$ and $n' \in \mathbb{N}$ such that 
$\mathbb{E}\left( 1/N_n \mid \event \right) \le 4\exp(-cn\tilde{r}_n^d) + \frac{4}{c' n \tilde{r}_n^d}$ for $n \ge n'$.
\end{prop}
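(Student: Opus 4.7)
The plan is to decompose $\mathbb{E}(1/N_n \mid \event)$ by splitting on whether $N_n$ is close to its mean $\overline{N}_n$ or not, and to rely on Propositions~\ref{prop:lower_pn}, \ref{prop:tail_Nn} and \ref{prop:negation_probability} already established. The key observation is that when $N_n$ is at least half its mean, $1/N_n$ is small deterministically of order $1/(n\tilde{r}_n^d)$; and when $N_n$ is smaller than half its mean, this is a rare event of exponentially small probability, while $1/N_n \le 1$ under $\event$.

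First, I would write
\[
\mathbb{E}\!\left(\tfrac{1}{N_n} \,\big|\, \event\right)
=
\mathbb{E}\!\left(\tfrac{1}{N_n}\,\mathbbm{1}(N_n \ge \overline{N}_n/2) \,\big|\, \event\right)
+
\mathbb{E}\!\left(\tfrac{1}{N_n}\,\mathbbm{1}(N_n < \overline{N}_n/2) \,\big|\, \event\right).
\]
On the first piece, the bound $1/N_n \le 2/\overline{N}_n$ applies, and Proposition~\ref{prop:lower_pn} gives $\overline{N}_n = n\integralB(\mu;\tilde{r}_n) \ge c_0 n\tilde{r}_n^d$ for some $c_0 > 0$, so this piece is at most $2/(c_0 n \tilde{r}_n^d)$. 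On the second piece, since $\event$ enforces $N_n \ge 1 + \lfloor \beta/2 \rfloor \ge 1$, we have $1/N_n \le 1$, reducing the piece to the conditional probability $\mathbb{P}(N_n < \overline{N}_n/2 \mid \event)$.

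To bound that conditional probability, I would use $\mathbb{P}(N_n < \overline{N}_n/2 \mid \event) \le \mathbb{P}(N_n < \overline{N}_n/2)/\mathbb{P}(\event)$. Proposition~\ref{prop:negation_probability} guarantees $\mathbb{P}(\lnot \event) \to 0$ because $n\tilde{r}_n^d \to \infty$ by (\ref{eq:rate_rn}) and $n^\tau \to \infty$, so there exists $n' \in \mathbb{N}$ such that $\mathbb{P}(\event) \ge 1/2$ for all $n \ge n'$. Applying the one-sided form of Proposition~\ref{prop:tail_Nn}, $\mathbb{P}(N_n < \overline{N}_n/2) \le 2\exp(-cn\tilde{r}_n^d)$, and combining with $\mathbb{P}(\event) \ge 1/2$ yields $\mathbb{P}(N_n < \overline{N}_n/2 \mid \event) \le 4\exp(-cn\tilde{r}_n^d)$. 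Adding the two pieces with $c' := c_0/2$ gives the stated bound.

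I do not foresee a real obstacle here, since every ingredient is already delivered by the preceding propositions; the only mild care needed is to argue that $\mathbb{P}(\event)$ is eventually bounded below by a positive constant, which is where the $n \ge n'$ restriction in the statement enters, and to keep track that conditioning on $\event$ is compatible with the deterministic bound $1/N_n \le 1$ that makes the exponential piece work.
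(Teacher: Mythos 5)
Your proof is correct and takes essentially the same approach as the paper's: split according to whether $N_n$ is close to $\overline{N}_n$, use Propositions~\ref{prop:lower_pn} and~\ref{prop:tail_Nn} for the deterministic and tail pieces respectively, and absorb the conditioning by the bound $\mathbb{P}(\event) \ge 1/2$. The paper writes the expectation as a sum $\sum_N \tfrac{1}{N}\mathbb{P}(N_n = N \mid \event)$ and pulls out the factor $1/\mathbb{P}(\event)\le 2$ before splitting, whereas you split first and only pay that factor on the tail piece — a cosmetic difference that if anything is slightly tighter; you are also more precise in citing Proposition~\ref{prop:negation_probability} (which handles the full event $\event$, including the $\zeta_n$ condition) for $\mathbb{P}(\event) \ge 1/2$, where the paper invokes only Proposition~\ref{prop:tail_Nn}.
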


\begin{proof}[Proof of Proposition~\ref{prop:prob_Nnomega}]
Let $n'$ be a sufficiently large constant and let $n>n'$.
\begin{align*}
    \mathbb{E}\left( \frac{1}{N_n} \mid \event \right)
    &= 
    \sum_{N=\omega+1}^{\infty} \frac{1}{N} \mathbb{P}(N_n=N \mid \event) 
    \le 
    \sum_{N=\omega+1}^{\infty} \frac{1}{N} \frac{\mathbb{P}(N_n=N)}{\mathbb{P}(\event)} 
    \le 
    2\sum_{N=\omega+1}^{\infty} \frac{1}{N} \mathbb{P}(N_n=N),  
\end{align*}
where the last inequality follows from 
$1 \ge \mathbb{P}(\event)=1-\mathbb{P}(\lnot \event) \ge 1/2$ proved by Proposition~\ref{prop:tail_Nn}. 
As Proposition~\ref{prop:lower_pn} proves $\overline{N}_n/2 = n \integralB(\mu;\tilde{r}_n)/2 \ge cn \tilde{r}_n^d \ge \omega$, the last term is also evaluated as
\begin{align*}
    \sum_{N=\omega+1}^{\infty} \frac{1}{N} \mathbb{P}(N_n=N)
    &=
    \sum_{\substack{N \ge \omega+1 \\ |N-\overline{N}_n| \ge \overline{N}_n/2}} \frac{1}{N} \mathbb{P}(N_n=N)
    +
    \sum_{\substack{N \ge \omega+1 \\ |N-\overline{N}_n| < \overline{N}_n/2}} \frac{1}{N} \mathbb{P}(N_n=N) \\
    &\le 
    \sum_{\substack{N \ge \omega+1 \\ |N-\overline{N}_n| \ge \overline{N}_n/2}} \mathbb{P}(N_n=N)
    +
    \frac{1}{\overline{N}_n/2}
    \sum_{\substack{N \ge \omega+1 \\ |N-\overline{N}_n| < \overline{N}_n/2}} \mathbb{P}(N_n=N) \\
    &\le 
    \mathbb{P}(|N_n-\overline{N}_n| \ge \overline{N}_n/2)
    +
    \frac{2}{\overline{N}_n} \\
        &\le 
        2\exp(-cn\tilde{r}_n^d) + \frac{2}{c' n \tilde{r}_n^d}
    \quad (\because \: \text{Proposition}~\ref{prop:lower_pn} \text{ and } \ref{prop:tail_Nn} );
    \end{align*}
    the assertion is proved. 
\end{proof}

\subsection{Proof of Example~\ref{ex:C-3}}
\label{proof:ex:C-3}

As $\omega=\lfloor \beta/2 \rfloor=1$, we regard $\tilde{R}_n \in \mathbb{R}^{N_n \times 1}$ as a vector of length $N_n$ for notation simplicity. 
$\zeta_n$ then reduces to 
\[
    \zeta_n 
    = 
    \langle 1_{N_n},\tilde{R}_n(\tilde{R}_n^{\top}\tilde{R}_n)^{-1}\tilde{R}_n^{\top}1_{N_n}\rangle 
    =
    \frac{\langle 1_{N_n}, \tilde{R}_n \rangle^2}{\|\tilde{R}_n\|_2^2}
    =
    N_n \frac{(N_n^{-1}\sum_{i=1}^{N_n} r_i)^2}{N_n^{-1}\sum_{i=1}^{N_n}r_i^2}
    =:
    N_n \frac{(\xi_n^{(1)})^2}{\xi_n^{(2)}},
\]
and $\mathbb{P}(\zeta_n \ge \varphi N_n) = \mathbb{P}((\xi_n^{(1)})^2/\xi_n^{(2)} \ge \varphi)$ holds. Therefore, it suffices to show the exponential concentration of $\xi_n^{(1)}$ and $\xi_n^{(2)}$, and 
\[
\varphi > \varrho = \frac{(\lim_{n \to \infty}\xi_n^{(1)}/\tilde{r}_n)^2}{\lim_{n \to \infty}\xi_n^{(2)}/\tilde{r}_n^2}
=
1-\frac{1}{(d+1)^2}
\quad 
(\text{with the scaling term }\tilde{r}_n).
\]

\paragraph{Moments of $r$.} 
Under the condition $N_n=N$ with fixed $\tilde{r}_n$, the radii $\{r_1,r_2,\ldots,r_N\}$ can be regarded as $N$ copies of a random variable $r$ following a conditional distribution $\mathbb{P}(r=\|X-X_*\|_2 \mid r \le \tilde{r}_n)$.

Considering that $\mu$ follows uniform distribution, the conditional probability density $\tilde{\eta}(r)$ is proportional to the area of the ball surface; we have $\tilde{\eta}(r) \propto r^{d-1}$ and thus we obtain 
\[
    \mathbb{E}(r^{k})
    =
    \left\{ 
        \int_0^{\tilde{r}_n} r^k r^{d-1} \diff r 
    \right\}
    \big/ 
    \left\{
        \int_0^{\tilde{r}_n} r^{d-1} \diff r
    \right\}
    =
    \frac{d}{d+k} \tilde{r}_n^k
    \quad (k=1,2). 
\]
Namely, $\mathbb{E}(r^{k})$ is of order $\tilde{r}_n^k$, and this ensures the following concentration of $\xi_n^{(1)},\xi_n^{(2)}$.

\paragraph{Concentration of $\xi_n^{(1)}$ and $\xi_n^{(2)}$.}

As $r_1,r_2,\ldots,r_{N_n} \in [0,\tilde{r}_n]$, Hoeffding inequality proves 
\[
    \mathbb{P}\left( 
        | \xi_n^{(k)} - \mathbb{E}(r_1^k) | 
        \ge \alpha \tilde{r}_n^k
        \mid N_n=N
    \right)
    \le 
    2 \exp(-2 \alpha^2 N)
\]
for $\alpha > 0$ and $k=1,2$. With sufficiently small $\alpha$ and $\xi_n^{(1)},\xi_n^{(2)} \in (0,1)$, we have 
\begin{align*}
    &\mathbb{P}\left(
        \frac{(\xi_n^{(1)})^2}{\xi_n^{(2)}}
        \in 
        \left[
            \frac{(\mathbb{E}(r_1)-\alpha \tilde{r}_n)^2}{\mathbb{E}(r_1^2) + \alpha \tilde{r}_n^2},
            \frac{(\mathbb{E}(r_1)+\alpha \tilde{r}_n)^2}{\mathbb{E}(r_1^2) - \alpha \tilde{r}_n^2}
        \right]
        \mid N_n=N
    \right) \\
    &\hspace{5em}
    \ge 
    \mathbb{P}(| \xi_n^{(1)} - \mathbb{E}(r_1) | 
        \le \alpha \tilde{r}_n
    \mid N_n=N) \,
    \mathbb{P}(| \xi_n^{(2)} - \mathbb{E}(r_1^2) | 
        \le \alpha \tilde{r}_n^2
    \mid N_n=N)
\end{align*}
indicating that, 
\begin{align}
    \mathbb{P}\left(
        \frac{(\xi_n^{(1)})^2}{\xi_n^{(2)}}
        \notin 
        \left[
            \frac{(\mathbb{E}(r_1)-\alpha \tilde{r}_n)^2}{\mathbb{E}(r_1^2) + \alpha \tilde{r}_n^2},
            \frac{(\mathbb{E}(r_1)+\alpha \tilde{r}_n)^2}{\mathbb{E}(r_1^2) - \alpha \tilde{r}_n^2}
        \right]
        \mid N_n=N
    \right) 
    \le 
    1-(1-2\exp(-2\alpha^2 N))^2
    \le 
    C \exp(-2\alpha^2 N)
    \label{eq:tail_fraction}
\end{align}
for some $C>0$. 
Therefore, by specifying $\tilde{\alpha}>0$ such that 
\[
        \left[
            \frac{(\mathbb{E}(r_1)-\alpha \tilde{r}_n)^2}{\mathbb{E}(r_1^2) + \alpha \tilde{r}_n^2},
            \frac{(\mathbb{E}(r_1)+\alpha \tilde{r}_n)^2}{\mathbb{E}(r_1^2) - \alpha \tilde{r}_n^2}
        \right]
        \subset 
        [\varrho - \tilde{\alpha},\varrho + \tilde{\alpha}], 
        \quad 
        \varrho := \frac{\mathbb{E}(r_1)^2}{\mathbb{E}(r_1^2)} = 1-\frac{1}{(d+1)^2},
\]
we have 
\[
    \mathbb{P}\left(
        \bigg| \frac{(\xi_n^{(1)})^2}{\xi_n^{(2)}} - \varrho \bigg| \ge \tilde{\alpha}
        \mid N_n=N
    \right)
    \le 
    \mathbb{P}\left(
        \frac{(\xi_n^{(1)})^2}{\xi_n^{(2)}}
        \notin 
        \left[
            \frac{(\mathbb{E}(r_1)-\alpha \tilde{r}_n)^2}{\mathbb{E}(r_1^2) + \alpha \tilde{r}_n^2},
            \frac{(\mathbb{E}(r_1)+\alpha \tilde{r}_n)^2}{\mathbb{E}(r_1^2) - \alpha \tilde{r}_n^2}
        \right]
        \mid N_n=N
    \right) 
    \le 
    C\exp(-2\alpha^2 N), 
\]
Combined with the concentration inequality of $N_n$ shown in Proposition~\ref{prop:tail_Nn}, we have 
\begin{align*}
    \mathbb{P}\left(
        \bigg| \frac{(\xi_n^{(1)})^2}{\xi_n^{(2)}} - \varrho \bigg| \ge \tilde{\alpha}
    \right)
    &=
    \sum_{N=0}^{\infty}
    \mathbb{P}\left(
        \bigg| \frac{(\xi_n^{(1)})^2}{\xi_n^{(2)}} - \varrho \bigg| \ge \tilde{\alpha} \mid N_n=N
    \right)
    \mathbb{P}(N_n=N) \\
    &=\sum_{N:|N-\overline{N}_n| < \overline{N}_n/2}
    \mathbb{P}\left(
        \bigg| \frac{(\xi_n^{(1)})^2}{\xi_n^{(2)}} - \varrho \bigg| \ge \tilde{\alpha} \mid N_n=N
    \right)
    \mathbb{P}(N_n=N) \\
    &\hspace{5em}+
    \sum_{N:|N-\overline{N}_n| \ge \overline{N}_n/2}
    \mathbb{P}\left(
        \bigg| \frac{(\xi_n^{(1)})^2}{\xi_n^{(2)}} - \varrho \bigg| \ge \tilde{\alpha} \mid N_n=N
    \right)
    \mathbb{P}(N_n=N) \\
    &\le 
    \sup_{N:|N-\overline{N}_n| < \overline{N}_n/2}
    \mathbb{P}\left(
        \bigg| \frac{(\xi_n^{(1)})^2}{\xi_n^{(2)}} - \varrho \bigg| \ge \tilde{\alpha} \mid N_n=N
    \right)
    +
    \mathbb{P}(|N_n-\overline{N}_n| \ge \overline{N}_n/2) \\
    &\le 
    2\exp(-\alpha \overline{N}_n) + 2\exp(-cn\tilde{r}_n^d) \\
    &\le 
    4\exp(-c' n\tilde{r}_n^d) \\
    &\le 
    4 \exp(-c' n^{\nu})
\end{align*}
for some $c',\nu>0$, as $\overline{N}_n \propto n \tilde{r}_n^d$ and (\ref{eq:rate_rn}). 
By specifying sufficiently small $\alpha>0$, we can take arbitrarily small $\tilde{\alpha}=\tilde{\alpha}(\alpha)>0$; with $\varphi=1-\frac{1}{2(d+1)^2}(>\varrho)$ and $\tilde{\alpha} \in (0,1/\{2(d+1)^2\})$, the assertion is proved by
\[
    \mathbb{P}(\zeta_n \ge \varphi N_n)
    =
    \mathbb{P}\left(
        \frac{(\xi_n^{(1)})^2}{\xi_n^{(2)}}
        \ge \varphi
    \right)
    \le 
    \mathbb{P}\left(
        \bigg| \frac{(\xi_n^{(1)})^2}{\xi_n^{(2)}} - \varrho \bigg| \ge \tilde{\alpha}
    \right)
    \le 
    4\exp(-c'n^{\nu}).
\]


\end{document}